\def\eqref#1{Eq.~(\ref{#1})}
\def\Eqref#1{Equation~\ref{#1}}
\def\1{\bm{1}}
\DeclareMathAlphabet{\mathsfit}{\encodingdefault}{\sfdefault}{m}{sl}
\SetMathAlphabet{\mathsfit}{bold}{\encodingdefault}{\sfdefault}{bx}{n}
\def\gC{{\mathcal{C}}}
\def\gE{{\mathcal{E}}}
\def\gF{{\mathcal{F}}}
\def\gG{{\mathcal{G}}}
\def\gH{{\mathcal{H}}}
\def\gK{{\mathcal{K}}}
\def\gN{{\mathcal{N}}}
\def\gO{{\mathcal{O}}}
\def\gR{{\mathcal{R}}}
\theoremstyle{definition}
\newtheorem{definition}{Definition}[section]
\newcommand*{\Scale}[2][4]{\scalebox{#1}{$#2$}}%
\newenvironment{Figure}
  {\par\medskip\noindent\minipage{\linewidth}}
  {\endminipage\par\medskip}
\theoremstyle{remark}
\newtheorem*{remark}{Remark}
\title{Can Graph Neural Networks Help Logic Reasoning?}
\author{
\hspace{-3mm}
Yuyu Zhang$^*$, Xinshi Chen$^*$, Yuan Yang$^*$, Arun Ramamurthy$^\dag$, Bo Li$^\ddagger$, Yuan Qi$^\diamond$, Le Song$^{*\diamond}$\\
$^*$Georgia Tech, $^\dag$Siemens, $^\ddagger$UIUC, $^\diamond$Ant Financial\\
}
\begin{document}
\maketitle

\begin{abstract}
Effectively combining logic reasoning and probabilistic inference has been a long-standing goal of machine learning: the former has the ability to generalize with small training data, while the latter provides a principled framework for dealing with noisy data. However, existing methods for combining the best of both worlds are typically computationally intensive. In this paper, we focus on Markov Logic Networks and explore the use of graph neural networks (GNNs) for representing probabilistic logic inference. It is revealed from our analysis that the representation power of GNN alone is not enough for such a task. We instead propose a more expressive variant, called ExpressGNN, which can perform effective probabilistic logic inference while being able to scale to a large number of entities. We demonstrate by several benchmark datasets that ExpressGNN has the potential to advance probabilistic logic reasoning to the next stage.
\end{abstract}

\vspace{-3mm}
\section{Introduction}

\setlength{\abovedisplayskip}{3pt}
\setlength{\abovedisplayshortskip}{3pt}
\setlength{\belowdisplayskip}{3pt}
\setlength{\belowdisplayshortskip}{3pt}
\setlength{\jot}{2pt}
\setlength{\floatsep}{1ex}
\setlength{\textfloatsep}{1ex}
\setlength{\intextsep}{1ex}

An elegant framework of combining logic reasoning and probabilistic inference is 
Markov Logic Network (MLN)~\citep{richardson2006markov}, where logic predicates are treated as random variables and logic formulae are used to define the potential functions.
It has greatly extended the ability of logic reasoning to handle noisy facts and partially correct logic formulae common in
real-world problems. Furthermore, MLN enables probabilistic graphical models to exploit prior knowledge and learn in the region of small or zero training samples. This second aspect is important in the context of lifelong learning and massive multitask learning, where most prediction targets have insufficient number of labeled data. 

\begin{wrapfigure}[17]{R}{0.331\textwidth}
    \vspace{-2.5mm}
    \centering
    \includegraphics[width=0.33\textwidth]{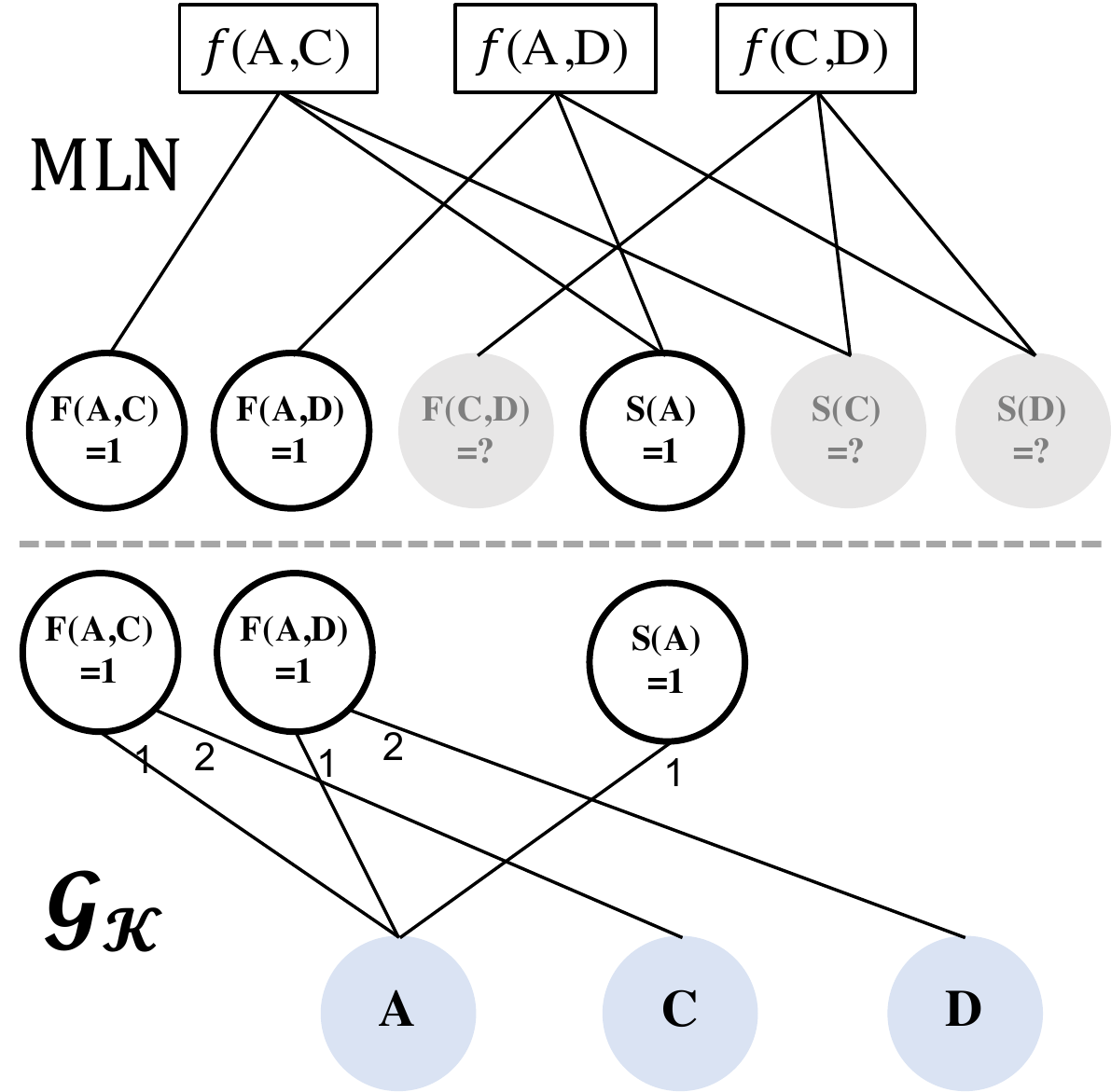}
    \vspace{-4.5mm}
    \caption{\small {\it Bottom}: A knowledge base as a factor graph. $\Scale[0.95]{\{A,C,D\}}$ are entities, and \texttt{F} (\texttt{Friend}) and \texttt{S} (\texttt{Smoke}) are predicates. {\it Top}: Markov Logic Network (MLN) with formula $f(c,c'):=\lnot \texttt{S}(c) \lor \lnot \texttt{F}(c,c') \lor \texttt{S}(c')$.}
    \vspace{-1mm}
    \label{fig:factor_graph}
\end{wrapfigure}
However, a central challenge is that probabilistic inference in MLN is computationally intensive. It contains $\gO(M^n)$ many random variables if there are $M$ entities and the involved predicate has $n$ arguments.
Approximate inference techniques such as MCMC and belief propagation have been proposed, but the large MLN makes them barely scalable to hundreds of entities.

Graph neural network (GNN) is a popular tool of learning representation for graph data, including but not limited to social networks, molecular graphs, and knowledge graphs~\citep{bruna2014spectral,duvenaud2015convolutional,dai2016discriminative,li2016gated,kipf2017semi,hamilton2017inductive}. It is natural to think that GNNs have the potential to improve the effectiveness of probabilistic logic inference in MLN. However, it is not clear \textit{why} and \textit{how} exactly GNNs may help.

In this paper, we explore the use of GNN for scalable probabilistic logic inference in MLN, and provide an affirmative answer on how to do that. In our method,
GNN is applied to knowledge bases which can be orders of magnitude smaller than grounded MLN; and then GNN embeddings are used to define mean field distributions in probabilistic logic inference. However, our analysis reveals that GNN embeddings alone will lead to {\bf inconsistent} parametrization due to the additional asymmetry created by logic formulae. Motivated by this analysis, we propose a more expressive variant, called ExpressGNN, which consists of {\bf (1)} an inductive GNN embedding component for learning representation from knowledge bases; {\bf (2)} and a transductive and tunable embedding component for compensating the asymmetry created by logic formulae in MLN.

We show by experiments that mean field approximation with ExpressGNN
enables efficient and effective probabilistic logic inference in modern knowledge bases. Furthermore, ExpressGNN can achieve these results with far fewer parameters than purely transductive embeddings, and yet it has the ability to adapt and generalize to new knowledge graphs.

{\bf Related work.} Previous probabilistic logic inference techniques either use sampling methods or belief propagation. More advanced variants have been proposed to make use of symmetries in MLNs to reduce computation (e.g., the lifted inference algorithms~\citep{singla2008lifted,singla2014approximate}).
However, these inference methods still barely scale to hundreds of entities. We describe specific methods and compare their performances in Section~\ref{sec:exp}. A recent seminal work explored the use of GNN for relation prediction, but the additional challenges from logic formula are not considered~\citep{qu2019gmnn}.
\vspace{-1mm}
\section{Knowledge Bases and Markov Logic Networks}
\vspace{-1mm}

\textbf{Knowledge base.} Typically, a knowledge base $\Kcal$ consists of a tuple $\Kcal = (\Ccal, \Rcal, \Ocal)$, with a set $\Ccal=\{c_1,\ldots,c_M\}$ of $M$ entities, a set $\Rcal=\{r_1,\ldots,r_N\}$ of $N$ relations, and a collection $\Ocal=\{o_1,\ldots,o_L\}$ of $L$ observed facts.  In the language of first-order logic, 
entities are also called constants. For instance, a constant can be a person or an object.  Relations are also called {\bf predicates}. Each predicate is a logic function defined over $\Ccal$, i.e.,
  $
    r(\cdot):\Ccal \times \ldots \times \Ccal\mapsto \cbr{0,1}.
  $
In general, the arguments of predicates are asymmetric.
  For instance, for the predicate $r(c,c'):=\texttt{L}(c,c')$ (\texttt{L} for \texttt{Like}) which checks whether $c$ likes $c'$, the arguments $c$ and $c'$ are not exchangeable.

With a particular set of entities assigned to the arguments, the predicate is called a grounded predicate, and {\bf each grounded predicate $\equiv$ a binary random variable},
which will be used to define MLN.
For a $d$-ary predicate, there are $M^d$ ways to ground it.
  We denote an assignment as $a_r$. For instance, with $a_r = (c,c')$, we can simply write a grounded predicate $r(c,c')$ as $r(a_r)$.
 Each observed fact in knowledge bases is a truth value $\cbr{0,1}$ assigned to a grounded predicate. For instance, a fact $o$ can be $[\texttt{L}(c,c')=1]$. 
The number of observed facts is typically much smaller than that of {unobserved facts}. We adopt the {open-world} paradigm and treat these {\bf unobserved facts $\equiv$ latent variables}.

As a more clear representation, we express a knowledge base $\gK$ by a bipartite graph $\Gcal_{\Kcal}=(\Ccal,\Ocal,\Ecal)$, where nodes on one side of the graph correspond to constants $\Ccal$ and nodes on the other side correspond to observed facts $\Ocal$, which is called \textit{factor} in this case (Fig.\ref{fig:factor_graph}). The set of $T$ edges, $\Ecal=\cbr{e_1,\ldots,e_T}$, will connect constants and the observed facts. More specifically, an edge
\begin{quote}
    \vspace{-3mm}
    $e = (c, o, i)$ between node $c$ and $o$ exists, if the grounded predicate associated with $o$ uses $c$ as an argument in its $i$-th argument position. (See Fig.~\ref{fig:factor_graph} for an illustration.)    
\end{quote}
\vspace{-2mm}
\textbf{Markov Logic Networks.} MLNs use logic formulae to define potential functions in undirected graphical models.  A logic formula $f(\cdot):\Ccal\times\ldots\times\Ccal\mapsto \{0,1\}$ is a binary function defined via the composition of a few predicates. For instance, a logic formula $f(c,c')$ can be
  \begin{align*}
    \texttt{Smoke}(c) \land \texttt{Friend}(c,c') \Rightarrow \texttt{Smoke}(c') \quad \Longleftrightarrow \quad \lnot \texttt{Smoke}(c) \lor \lnot \texttt{Friend}(c,c') \lor \texttt{Smoke}(c'), 
  \end{align*}
  where $\lnot$ is negation and the equivalence is established by De Morgan's law.
  Similar to predicates, %
  we denote an assignment of constants to the arguments of a formula $f$ as $a_f$, and the entire collection of consistent assignments of constants as $\Acal_f=\{a_f^1,a_f^2,\ldots\}$. Given these logic representations, 
MLN can be defined as a joint distribution over all observed facts $\Ocal$ and unobserved facts $\Hcal$ as (Fig.~\ref{fig:factor_graph})
\begin{align}
    \label{eq:mln}
    P\rbr{\Ocal, \Hcal} 
    := {\textstyle \frac{1}{Z} \exp\rbr{\sum\nolimits_{f\in \Fcal}w_f \sum\nolimits_{a_f \in \Acal_f} \phi_f(a_f)} },
\end{align}
where $Z$ is a normalizing constant summing over all grounded predicates and $\phi_f(\cdot)$ is the potential function defined by a formula $f$.
One form of $\phi_f(\cdot)$ can simply be the truth value of the logic formula $f$. For instance, if the formula is $f(c,c'):=\lnot \texttt{S}(c) \lor \lnot \texttt{F}(c,c') \lor \texttt{S}(c')$, then $\phi_f(c,c')$ can simply take value $1$ when $f(c,c')$ is true and $0$ otherwise. Other more sophisticated $\phi_f$ can also be designed, which have the potential to take into account complex entities, such as images or texts, but will not be the focus of this paper. The weight $w_f$ can be viewed as the confidence score of formulae $f$: the higher the weight, the more accurate the formula is.%

\section{Challenges for Inference in Markov Logic Networks}
\vspace{-0.5mm}
Inference in Markov Logic Networks can be very computationally intensive, since the inference needs to be carried out in the fully grounded network involving all grounded variables and formula nodes. Most previous inference methods barely scale to hundreds of entities.

{\bf Mean field approximation.} We will focus on mean field approximation, since it has been demonstrated to scale up to many large graphical models, such as latent Dirichlet allocation for modeling topics from large text corpus~\citep{qu2019gmnn,hoffman2013stochastic}. In this case, the conditional distribution $P(\Hcal|\Ocal)$ is approximated by a product distribution, 
$P(\Hcal | \Ocal) \approx \prod_{r(a_r) \in \Hcal} Q^*(r(a_r))$. 
The set of mean field distributions $Q^*(r(a_r))$ can be determined by KL-divergence minimization 
\begin{align}
\hspace{-2mm}
    \cbr{Q^*(r(a_r))} 
    &= \text{argmin}_{\{Q(r(a_r))\}}~~  
    \Scale[0.95]{\text{KL}\rbr{\prod\nolimits_{r(a_r) \in \Hcal} Q(r(a_r)) \quad \| \quad P(\Hcal | \Ocal)} \label{eq:meanfield_obj}}    \\
    &= \text{argmin}_{\{Q(r(a_r))\}}~~
    \Scale[0.95]{\sum\nolimits_{r(a_r)\in\Hcal}\EE[\ln Q(r(a_r))] - \sum\nolimits_{f\in \Fcal}w_f\sum\nolimits_{a_f\in  \Acal_f}\EE[\phi_f(a_f)|\Ocal]}, \nonumber
\end{align}
where $\EE[\phi_f(a_f)|\Ocal]$ means that observed predicates in $\Ocal$ are fixed to their actual values with probability 1. For instance, a grounded formula $f(A,B)=\lnot \texttt{S}(A) \lor \lnot \texttt{F}(A,B) \lor \texttt{S}(B)$ with observations $\texttt{S}(A)=1$ and  $\texttt{F}(A,B)=1$ will result in $\EE[\phi_f(a_f)|\Ocal]=\sum_{\texttt{S}(B)=\{0,1\}}Q(\texttt{S}(B))(\lnot 1 \lor \lnot 1 \lor \texttt{S}(B))$. In theory, one can use mean field iteration to obtain the optimal $Q^*(r(a_r))$, but due to the large number of nodes in the grounded network, this iterative algorithm can be very inefficient.

Thus, we need to carefully think about how to parametrize the set of $Q(r(a_r))$, such that the parametrization is expressive enough for representing posterior distributions, while at the same time leading to efficient algorithms. Some common choices are
\begin{itemize}[leftmargin=*,nolistsep,nosep]
    \item {\bf Naive parametrization.} Assign each $Q(r(a_r))$ a parameter $q_{r(a_r)}\Scale[0.9]{\in[0,1]}$. Such parametrization is very expressive, but the number of parameters is the same as MLN size $\gO(M^n)$.
    \item {\bf Tunable embedding parametrization.} Assign each entity $c$ a vector embedding $\mu_c\in \RR^d$, and define $Q(r(a_r))$ using involved entities. For instance, $Q(r(c,c')):=\texttt{logistic}\big(\texttt{MLP}_r(\mu_c,\mu_c')\big)$ where $\texttt{MLP}_r$ is a neural network specific to predicate $r$ and $\texttt{logistic}(\cdot)$ is the standard logistic function. The number parameters in such scheme is linear in the number of entities, $\Ocal(d|\Ccal|)$, but very high dimensional embedding $\mu_c$ may be needed to express the posteriors. 
\end{itemize}

\begin{wrapfigure}[22]{r}{0.45\textwidth}
\vspace{-3mm}
    \begin{algorithm}[H]
      \DontPrintSemicolon
      \SetKwFunction{Grad}{Grad}
      \SetKwProg{Fn}{Function}{:}{}
      \SetKwFor{uFor}{For}{do}{}
      \SetKwFor{ForPar}{For all}{do in parallel}{}
      \SetKwComment{Comment}{$\triangleright$\ }{}
      \SetCommentSty{mycommfont}
      \SetKwFunction{GNN}{GNN}
    \Fn{\GNN{$\gG_{\gK}=(\gC,\gO,\gE)$}}{
      Initialize entity node: $\mu_c^{(0)} = \mu_{\gC},~\forall c \in \Ccal$\;
      Fact node: $\mu_o^{(0)} = \mu_r,\forall o\equiv\sbr{r(a_r)=v}$\;
      \Comment{nodes of the same type are initialized with a uniform color}
      \uFor{$t=0$ to $T-1$}{
      Compute message $\forall (c,o,i)\in\gE$: \;
      $m_{o\rightarrow c}^{(t+1)} = \texttt{MLP}_{1,i,v}(\mu_o^{(t)}, \mu_c^{(t)})$\;
      $m_{c\rightarrow o}^{(t+1)} = \texttt{MLP}_{2,i,v}(\mu_c^{(t)}, \mu_o^{(t)})$\;
      Aggregate message $\forall c\in\gC,o\in\gO$: \;
      $m_c^{(t+1)} = \texttt{AGG}_1(\{m_{o\rightarrow c}^{(t+1)}\}_{o \in \Ncal(c)})$ \;
      $m_o^{(t+1)} = \texttt{AGG}_2(\{m_{c\rightarrow o}^{(t+1)}\}_{c \in \Ncal(o)})$\;
        \Comment{aggregate colors of neighborhoods}
      Update embeddings $\forall c\in\gC,o\in\gO$:\;
      $\mu_c^{(t+1)} = \texttt{MLP}_3(\mu_c^{(t)}, m_c^{(t+1)})$ \;
      $\mu_o^{(t+1)} = \texttt{MLP}_4(\mu_o^{(t)}, m_o^{(t+1)})$\;
        \Comment{hash colors of nodes and their neighbor- -hoods into unique new colors}
    }}
     \vspace{-2mm}
     \KwRet node embeddings $\{\mu_c^{(T)}\}$ and $\{\mu_o^{(T)}\}$\;
    \vspace{-0.2mm}
    \caption{GNN and {\color{BrickRed} Color Refinement}}\label{algo:gnn}
    \end{algorithm}
\end{wrapfigure}
Note that both schemes are {\bf transductive}, and the learned $q_{r(a_r)}$ or $\mu_c$ can only be used for the training graph, but {can not} be used for new entities or different but related knowledge graphs ({\bf inductive setting}). 

\textbf{Stochastic inference.} The objective in~\eqref{eq:meanfield_obj} contains an expensive summation, making its evaluation and optimization inefficient. For instance, for formula $f(c,c'):=\lnot \texttt{S}(c) \lor \lnot \texttt{F}(c,c') \lor \texttt{S}(c')$, the number of terms involved in $\sum_{a_f \in \Acal_f}$ will be square in the number of entities. Thus, we approximate the objective function with stochastic sampling, and then optimize the parameters in $Q(r(a_r))$ via stochastic gradients, and various strategies can be used to reduce the variance of the stochastic gradients~\citep{qu2019gmnn,hoffman2013stochastic}. 

\vspace{-3mm} 
\section{Graph Neural Network for Inference} 
\label{sec:gnn}
\vspace{-2mm} 

To efficiently parametrize the entity embeddings with less parameters than tunable embeddings, we propose to use a GNN on the knowledge graph $\Gcal_{\Kcal}$, much smaller than the fully grounded MLN (Figure~\ref{fig:factor_graph}), to generate embeddings $\mu_c$ of each entity $c$, and then use these embeddings to define mean field distributions. The advantage of GNN is that the number of parameters can be independent of the number of entities. Any entity embedding $\mu_c$ can be reproduced by running GNN iterations online. Thus, GNN based parametrization can potentially be very memory efficient, making it possible to scale up to a large number of entities. Furthermore, the learned GNN parameters can be used for both transductive and inductive settings.

The architecture of GNN over a knowledge graph $\gG_{\gK}$ is given in Algorithm~\ref{algo:gnn},
where the multilayer neural networks $\texttt{MLP}_{1,i,v}$ and $\texttt{MLP}_{2,i,v}$ take values $v$ of observed facts and argument positions $i$ into account, $\texttt{MLP}_3$ and $\texttt{MLP}_4$ are standard multilayer neural networks, and $\texttt{AGG}_1$ and $\texttt{AGG}_2$ are typically sum pooling functions. These embedding updates are carried out for a finite $T$ times. In general, the more iterations are carried out, the larger the graph neighborhood around a node will be integrated into the representation. For simplicity of notation, we use use $\{\mu_c\}$ and $\{\mu_o\}$ to refer to the final embeddings $\{\mu_c^{(T)}\}$ and $\{\mu_o^{(T)}\}$. Then these embeddings are used to define the mean field distributions. For instance, if a predicate $r(c,c')$ involves two entities $c$ and $c'$, $Q(r(c,c'))$ can be defined as
\begin{align}
        Q(r(c,c')) = \texttt{logistic}\rbr{ \texttt{MLP}_r\rbr{\mu_c, \mu_{c'}}},~~\text{where}~\{\mu_c,\mu_o\}=\text{GNN}(\Gcal_{\Kcal})
        \label{eq:para-Q}
\end{align}
and $\texttt{MLP}_r$ are predicate specific multilayer neural networks. For $d$ dimensional embeddings, the number of parameters in GNN model is typically $O(d^2)$, independent of the number of entities. 

\vspace{-3mm}
\section{Is GNN Expressive Enough?}
\vspace{-2mm}

Now a central question is: is GNN parametrization using the knowledge graph $\Gcal_{\Kcal}$ expressive enough? Or will there be situations where two random variables $r(c,c'')$ and $r(c',c'')$ have {\bf different} distributions in MLN but $Q(r(c,c''))$ and $Q(r(c',c''))$ are forced to be {\bf the same} due to the above characteristic of GNN embeddings? 
The question arises since the knowledge graph $\Gcal_{\Kcal}$ is different from the fully grounded MLN (Figure~\ref{fig:factor_graph}). We will use two theorems (proofs are all given in Appendix~\ref{app:thm-proof}) to analyze whether GNN is expressive enough. Our main results can be summarized as:
\begin{itemize}[wide,nolistsep,nosep]
    \item[(1)] Without formulae tying together the predicates, GNN embeddings are expressive enough for feature representations of latent facts in the knowledge base.
    \item[(2)] With formulae in MLN modeling the dependency between predicates, GNN embedding becomes insufficient for posterior parametrization.
\end{itemize}
These theoretical anlayses motivate a more expressive variant, called ExpressiveGNN in Section~\ref{sec:expressgnn}, to compensate for the insufficient representaiton power of GNN for posterior parametrization. 

\vspace{-3mm}
\subsection{Property of GNN}
\vspace{-2mm}

Recent research shows that GNNs can learn to perform approximate graph isomorphism check~\citep{dai2016discriminative,xu2018powerful}. In our case, graph isomorphism of knowledge graphs is defined as follows.
\begin{definition}[Graph Isomorphism] An isomorphism of graphs $\gG_{\gK}=(\gC,\gO,\gE)$ and $\gG_{\gK'}=(\gC',\gO',\gE')$ is a bijection between the nodes $\pi:\gC\cup\gO\rightarrow \gC'\cup\gO'$ such that (1) $\forall o\in \gO,$ $ \texttt{NodeType}\rbr{o}=\texttt{NodeType}\rbr{\pi(o)}$; (2) $c$ and $o$ are adjacent in $\gG_{\gK}$ if and only if $\pi(c)$ and $\pi(o)$ are adjacent in $\gG_{\gK'}$ and $\texttt{EdgeType}\rbr{c,o}= \texttt{EdgeType}\rbr{\pi(c),\pi(o)}$. 
\end{definition}
\vspace{-2mm}
In this definition, $\texttt{NodeType}$ is determined by the associated predicate of a fact, i.e.,
$\texttt{NodeType}(o\equiv[r(a_r)=v])=r$. $\texttt{EdgeType}$ is determined by the observed value and argument position, i.e., $\texttt{EdgeType}\rbr{(c,o\equiv[r(a_r)=v],i) }=(v,i)$. Our GNN architecture in Algorithm~\ref{algo:gnn} is adapted to these graph topology specifications. More precisely, the initial node embeddings correspond to $\texttt{NodeType}$s and different $\texttt{MLP}$s are used for different $\texttt{EdgeType}$s.

It has been proved that GNNs with injections acting on neighborhood features  can be as powerful as 1-dimensional Weisfeiler-Lehman graph isomorphism test~\citep{shervashidze2011weisfeiler,xu2018powerful}, also known as {\bf color refinement}. The color refinement procedure is given in Algorithm~\ref{algo:gnn} by red texts, which is
analogous to updates of GNN embeddings. We use it to define indistinguishable nodes.
\begin{definition}[Indistinguishable Nodes]
Two nodes $c,c'$ in a graph $\gG$ are {\bf indistinguishable} if they have the same color after the color refinement procedure terminates and no further refinement of the node color classes is possible. In this paper, we use the following notation:
\begin{align*}
(c_1,\cdots,c_n) \overset{\gG}{\longleftrightarrow} (c_1',\cdots,c_n'):~\text{for each}~i\in[n],  c_i~\text{and}~c_i'~\text{are indistinguishable in}~\gG.
\end{align*}
\end{definition}
\vspace{-1mm}
While GNN has the ability to do color refinement, it is {\it at most} as powerful as color refinement~\citep{xu2018powerful}. Therefore, if $c\overset{\gG}{\longleftrightarrow}c'$, their final GNN embeddings will be the same, i.e., $\mu_c=\mu_c'$. When we use GNN embeddings to define $Q$ as in \eqref{eq:para-Q}, it implies that $Q(r(c,c''))=Q(r(c',c'')), \forall c'' \in \Ccal$.

\vspace{-3mm}
\subsection{GNN is expressive for feature representations in knowledge bases}
\vspace{-2mm}

GNN embeddings are computed based on knowledge bases ${\Kcal}=(\gC,\gR,\gO)$ involving only observed facts $\Ocal$, but the resulting entity embeddings $\cbr{\mu_c}$ will be used to represent a much larger set of unobserved facts $\gH$. In this section, we will show that, when formulae are not considered, these entity embeddings $\cbr{\mu_c}$  are expressive enough for representing the latent facts $\gH$.

\begin{figure}[h!]
    \centering
    \includegraphics[width=\textwidth]{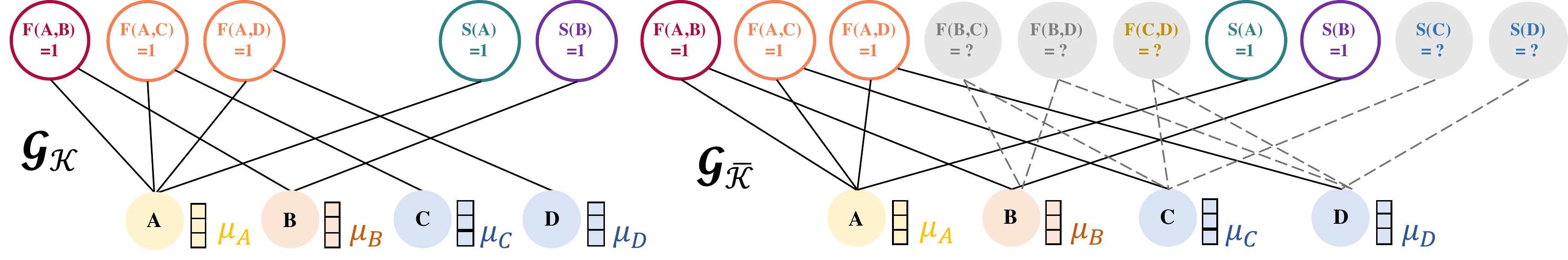}
    \vspace{-3mm}
    \caption{\small Factor graph $\gG_{\gK}$ and the corresponding augmented factor graph $\gG_{\overline{\gK}}$ after running coloring refinement.}
    \label{fig:augmented-graph}
    \vspace{-1mm}
\end{figure}
To better explain our idea, we define a fully grounded knowledge base as:
$
    \overline{\gK}:=\rbr{\gC,\gR,\gO\cup\gH},
$
where all unobserved facts are included and assigned a value of ``$?$''. Therefore, the facts in $\overline{\gK}$ can take one of three different values, i.e., $v\in\cbr{0,1,?}$. Its corresponding {\bf augmented factor graph} is $\gG_{\overline{\gK}}=(\gC,\gO\cup\gH,\gE\cup\gE_{\gH})$.
See Fig.~\ref{fig:augmented-graph} for an illustration of $\Gcal_{\Kcal}$ and $\Gcal_{\overline{\gK}}$.

\begin{restatable}{theorem}{thmAugKnowBase} \label{thm:know-base-unobs}
Let $\gG_{\gK} = (\gC,  \gO,\gE)$ be the factor graph for a knowledge base $\gK$ and $\gG_{\overline{\gK}}=(\gC,  \gO\cup \gH,\gE\cup\gE_{\gH})$ be the corresponding augmented version.
Then the following two statements are true:
\begin{itemize}[wide,nolistsep,nosep]
    \item[(1)] $c\overset{\gG_{\gK}}{\longleftrightarrow} c'$ if and only if $c\overset{\gG_{\overline{\gK}}}{\longleftrightarrow} c'$;
    \item[(2)]  $[r(c_1,\ldots,c_n)=v]\overset{\gG_{\overline{\gK}}}{\longleftrightarrow} [r(c_1',\ldots,c_{n}')=v]$ if and only if $(c_1,\ldots,c_n)\overset{\gG_{\overline{\gK}}}{\longleftrightarrow}(c_1',\ldots,c_n')$.
\end{itemize}
\end{restatable}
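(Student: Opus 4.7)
The plan is to use the characterization of color refinement as converging to the coarsest equitable refinement of the initial NodeType coloring. Write $\chi_{\gG}(\cdot)$ for that stable coloring on a graph $\gG$, so that $u\overset{\gG}{\longleftrightarrow}v$ is equivalent to $\chi_{\gG}(u)=\chi_{\gG}(v)$. I would prove each direction by exhibiting an explicit candidate coloring and verifying equitability, then appealing to the coarsest-equitable characterization.

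For Part (1), forward direction, I would define $\chi'$ on $\gG_{\overline{\gK}}$ by $\chi'(u)=\chi_{\gG_{\gK}}(u)$ on entities and observed facts, and $\chi'([r(c_1,\ldots,c_n)=?]):=(r,\chi_{\gG_{\gK}}(c_1),\ldots,\chi_{\gG_{\gK}}(c_n))$ on each latent fact. This refines the initial NodeType coloring of $\gG_{\overline{\gK}}$ by construction, so it suffices to show $\chi'$ is equitable. At observed facts, equitability follows because their entity neighborhoods are the same in $\gG_{\gK}$ and $\gG_{\overline{\gK}}$ and $\chi_{\gG_{\gK}}$ is equitable on $\gG_{\gK}$; at latent facts it is immediate from the definition of $\chi'$. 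The essential step is equitability at entity nodes: the multiset of (edge-type, neighbor-color) pairs at $c$ in $\gG_{\overline{\gK}}$ splits by edge type into an observed part ($v\in\{0,1\}$) and a latent part ($v={?}$). The observed part depends only on $\chi_{\gG_{\gK}}(c)$ by equitability on $\gG_{\gK}$. For the latent part, fix a predicate $r$, a position $i$, and a class pattern for the remaining $n-1$ positions; the count of latent facts at $c$ matching this triple equals (total tuples realizing it) minus (observed tuples realizing it), the first being a product of class sizes and therefore independent of $c$, the second being again determined by $\chi_{\gG_{\gK}}(c)$. Hence $\chi'$ is equitable, so $\chi_{\gG_{\overline{\gK}}}$ coarsens $\chi'$, yielding $c\overset{\gG_{\gK}}{\longleftrightarrow}c'\Rightarrow c\overset{\gG_{\overline{\gK}}}{\longleftrightarrow}c'$.

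For the backward direction of Part (1), I would restrict: define $\chi''(u):=\chi_{\gG_{\overline{\gK}}}(u)$ on $\gC\cup\gO$. Equitability at observed facts is immediate. Equitability at entities relies on the disjointness of the edge types $({?},i)$ and $(v,i)$ with $v\in\{0,1\}$: the full neighbor multiset at $c$ in $\gG_{\overline{\gK}}$ is determined by $\chi_{\gG_{\overline{\gK}}}(c)$, and its observed sub-multiset -- which is exactly the neighbor multiset of $c$ in $\gG_{\gK}$ -- is therefore also determined by $\chi''(c)$. Hence $\chi''$ is an equitable refinement of the initial coloring on $\gG_{\gK}$, so $\chi_{\gG_{\gK}}$ coarsens $\chi''$, giving the reverse implication.

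Part (2) I would handle by direct induction on the color-refinement iteration. For any fixed $v$ the edge types $(v,1),\ldots,(v,n)$ are pairwise distinct, so the update $\chi^{(t+1)}(o)$ at a fact node $o=[r(c_1,\ldots,c_n)=v]$ effectively records the ordered tuple $(\chi^{(t)}(c_i))_{i=1}^{n}$ together with $r$ and $v$. Hence $\chi^{(t)}(o)=\chi^{(t)}(o')$ at every $t$ forces $\chi^{(t)}(c_i)=\chi^{(t)}(c_i')$ at every $t$ and every $i$, which gives $(\Rightarrow)$; conversely, if $c_i\overset{\gG_{\overline{\gK}}}{\longleftrightarrow}c_i'$ for every $i$, then together with the equal initial colors $\chi^{(0)}(o)=\chi^{(0)}(o')=r$, a straightforward induction over $t$ with matching aggregation inputs yields $o\overset{\gG_{\overline{\gK}}}{\longleftrightarrow}o'$. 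The hardest step is the latent-part count in Part (1) forward: it requires combining class-size invariance of the total number of tuples realizing a pattern with the equitability-based determination of the observed count, and assembling these into equality of multisets rather than of aggregate numbers; everything else is bookkeeping around edge-type separation and the standard coarsest-equitable-refinement characterization.
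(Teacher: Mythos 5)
Your proposal is correct, and it reaches the theorem by a route that is organized differently from the paper's even though the crucial counting ingredient is the same. The paper proves the hard direction of statement (1) dynamically: it builds $\gG_{\overline{\gK}}$ from $\gG_{\gK}$ by attaching the latent fact nodes to their arguments one position at a time, and runs an induction over argument positions showing that each attachment preserves indistinguishability, with the count of relevant latent facts obtained as (all groundings matching a color pattern) minus (observed groundings matching it). You instead give a static certificate: you exhibit one explicit coloring of $\gG_{\overline{\gK}}$ (stable $\gG_{\gK}$-colors on $\gC\cup\gO$, and the tuple of argument colors on each latent fact), verify it is equitable using exactly that same total-minus-observed count, and invoke the coarsest-equitable-refinement characterization of color refinement; your converse direction is a clean restriction argument via edge-type disjointness, which is more rigorous than the paper's one-line ``easy to see.'' Your statement (2) is also handled directly by induction on refinement rounds using the distinctness of the position-tagged edge types, whereas the paper phrases the same observation through neighborhoods of the fact nodes and (unnecessarily, since both sides of (2) live in $\gG_{\overline{\gK}}$) routes through statement (1). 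The certificate style buys a shorter, more modular argument that does not need the intermediate graphs $\gG^{(k)}$; the paper's incremental induction is more elementary in that it never appeals to the equitable-partition characterization. One small point to make explicit in your write-up: the step ``the observed count is determined by $\chi_{\gG_{\gK}}(c)$'' needs the fact that the stable color of an observed fact node in $\gG_{\gK}$ determines its ordered tuple of argument colors (by positionwise edge types); you essentially prove this in your Part (2) reasoning, so it suffices to cite it there, and you should also keep observed-fact colors and latent-fact colors formally disjoint so that $\chi'$ refines the initial NodeType coloring.
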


Intuitively, Theorem~\ref{thm:know-base-unobs} means that without considering the presence of formulae in MLN, unobserved predicates $\gH$ can be represented purely based on GNN embeddings obtained from $\Gcal_{\Kcal}$. For instance, to represent an unobserved predicate $r(c_1,\ldots,c_n)$ in $\Gcal_{\overline{\gK}}$, we only need to compute $\cbr{\mu_c,\mu_o}=\texttt{GNN}(\gG_{\gK})$, and then use $\texttt{MLP}_r \rbr{\mu_{c_1},\ldots, \mu_{c_{n}}}$ as its feature. This feature representation is as expressive as that obtained from $\cbr{\mu_c,\mu_o}=\texttt{GNN}(\gG_{\overline{\gK}})$,
and thus drastically reducing the computation.

\vspace{-3mm}
\subsection{GNN is not expressive enough for posterior parametrization}\label{sec:counter-example}
\vspace{-2mm}

Taking into account the influence of formulae on the posterior distributions of predicates, we can show that GNN embeddings alone become insufficient representations for parameterizing these posteriors. To better explain our analysis in this section, we first extend the definition of graph isomorphism to node isomorphism and then state the theorem.

\begin{definition}[Isomorphic Nodes]
Two ordered sequences of nodes $(c_1,\ldots,c_n)$ and $(c_1',\ldots,c_n')$ are {\bf isomorphic} in a graph $\gG_{\gK}$ if there exists an isomorphism from $\gG_{\gK}=(\gC,\gO,\gE)$ to itself, i.e., $\pi:\gC\cup\gO \rightarrow \gC\cup\gO$, such that $\pi(c_1)=c_1',\ldots, \pi(c_{n})=c_{n}'$. Further, we use the following notation
\begin{align*}
(c_1,\cdots,c_n) \overset{\gG_{\gK}}{\Longleftrightarrow} (c_1',\cdots,c_n'): (c_1,\cdots,c_n)~\text{and}~(c_1',\cdots,c_n')~\text{are isomorphic in}~\gG_{\gK}.
\end{align*}
\end{definition}
\begin{restatable}{theorem}{thmautomorphism} \label{thm:isomorphism}
    Consider a knowledge base $\gK=(\gC,\gR,\gO)$ and any $r\in\gR$. Two latent random variables $X:=r(c_1,\ldots,c_{n})$ and $X':=r(c_1',\ldots,c_n')$ have the same posterior distribution in {\bf any} MLN {\bf if and only if} $(c_1,\cdots,c_n) \overset{\gG_{\gK}}{\Longleftrightarrow} (c_1',\cdots,c_n')$.
\end{restatable}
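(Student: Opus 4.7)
The plan is to split the biconditional into its two directions. The forward direction $(\Leftarrow)$ is a symmetry argument; the backward direction $(\Rightarrow)$, proved by contrapositive, is the more delicate step and requires exhibiting a specific MLN that separates the two posteriors.

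For $(\Leftarrow)$: suppose $\pi$ is an automorphism of $\gG_{\gK}$ with $\pi(c_i)=c_i'$ for every $i$. I would extend $\pi$ to act on grounded predicates by $\pi\cdot r(d_1,\ldots,d_k):=r(\pi(d_1),\ldots,\pi(d_k))$, and then to act on a joint state $s$ (a truth-assignment to all grounded predicates) by $\sigma_\pi(s)(P):=s(\pi\cdot P)$. Since $\pi$ preserves \texttt{NodeType} and \texttt{EdgeType}, it maps each observed fact with its value to another observed fact with the same value, so $\sigma_\pi$ is a bijection on the set of states consistent with $\Ocal$. For any formula $f\in\Fcal$, substituting $a_f\mapsto\pi\cdot a_f$ (a bijection on $\Acal_f$) gives $\sum_{a_f}\phi_f(a_f,\sigma_\pi(s))=\sum_{a_f}\phi_f(a_f,s)$, so $P(\sigma_\pi(s))=P(s)$. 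By construction $\sigma_\pi(s)(X)=s(X')$, so $\sigma_\pi$ pairs up $\{X'=x\}\cap\{\Ocal\}$ with $\{X=x\}\cap\{\Ocal\}$ probability-preservingly, yielding $P(X\mid\Ocal)=P(X'\mid\Ocal)$.

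For $(\Rightarrow)$: I would prove the contrapositive, showing that distinct automorphism orbits of $\gG_{\gK}$ can be separated by some MLN. The key tool is the classical fact from finite model theory that in a \emph{finite} relational structure, two tuples lie in the same automorphism orbit if and only if they satisfy the same first-order formulas over the signature of the structure. Therefore, if $(c_1,\ldots,c_n)$ and $(c_1',\ldots,c_n')$ are in different orbits, there is an FOL formula $\psi(x_1,\ldots,x_n)$ in the observed predicate signature with $\gG_{\gK}\models\psi(c_1,\ldots,c_n)$ and $\gG_{\gK}\not\models\psi(c_1',\ldots,c_n')$. I would then take the MLN with the single weighted template $f(x_1,\ldots,x_n):=\psi(x_1,\ldots,x_n)\Rightarrow r(x_1,\ldots,x_n)$ of weight $w\neq 0$. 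Because $\psi$ only references observations, its truth value at any grounding is deterministic; because $r$ appears exactly once in $f$, the groundings factorize over distinct $r$-variables. A direct computation then gives $P(X=1\mid\Ocal)=\text{logistic}(w)$ while $P(X'=1\mid\Ocal)=1/2$, which differ for any $w\neq 0$.

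The main obstacle is the invocation of the FOL-separation-of-orbits theorem together with the conversion of $\psi$ into an MLN-admissible formula: $\psi$ may contain quantifiers over $\gC$, which must be handled either via MLN's standard grounding of quantified formulae or by unfolding into finite conjunctions/disjunctions indexed by the constants in $\gC$. One must also justify that $\psi$ may be chosen to mention only observed predicates, which follows because $\gG_{\gK}$ is built solely from observations and so its automorphism type is already determined by the observed signature. With these ingredients in place, the symmetry argument and the distinguishing-MLN construction combine to give the full biconditional.
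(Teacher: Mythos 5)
Your sufficiency direction is essentially the paper's argument (lift the automorphism $\pi$ of $\gG_{\gK}$ to the grounded MLN and note the induced probability-preserving bijection on worlds), and it is fine. The gap is in the necessity direction. The finite-model-theory fact you invoke yields a first-order formula $\psi$ over the signature of the \emph{observation structure} $\gG_{\gK}$, i.e., a vocabulary that distinguishes ``$r$ observed true'', ``$r$ observed false'' and ``$r$ unobserved'' (the paper's automorphisms must preserve edge types $(v,i)$, so orbits are computed in this three-valued structure). An MLN formula, by contrast, is evaluated on possible worlds in which \emph{every} ground atom has a truth value; the predicate language cannot express ``this atom is unobserved'', and once the quantifiers of $\psi$ are unfolded over $\gC$, its groundings will in general contain latent atoms. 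Hence your two key claims fail: the truth value of $\psi(c_1,\ldots,c_n)$ given $\gO$ need not be deterministic, and the groundings of $\psi\Rightarrow r(x_1,\ldots,x_n)$ need not factorize over the $r$-atoms, since a latent atom occurring inside $\psi$ couples many groundings. Consequently the ``direct computation'' $P(X=1\mid\gO)=\texttt{logistic}(w)$ versus $P(X'=1\mid\gO)=1/2$ is unjustified: for instance, if the orbit difference is that some atom is observed false near $(c_1,\ldots,c_n)$ but unobserved near $(c_1',\ldots,c_n')$, the natural separating clause has a random premise at one of the two groundings and the resulting posterior is a mixture strictly between $1/2$ and $\texttt{logistic}(w)$, not one of your two clean values.

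What your proposal skips is precisely the crux on which the paper's proof spends its effort: it constructs, by hand, a subgraph $\gG^*_{c_{1:n}}$ of \emph{observed} fact nodes around $(c_1,\ldots,c_n)$ (with a correction step that deletes certain value-$0$ fact nodes) such that no value-ignoring isomorphic copy anchored at $(c_1',\ldots,c_n')$ carries the same observation set; the formula is then the conjunction of the corresponding observed literals (negated where the observed value is $0$) implying $r(x_1,\ldots,x_n)$. Because the premise of the distinguished grounding consists only of observed atoms with matching polarities, its truth is pinned down by $\gO$, while the claim guarantees that no grounding whose conclusion is $r(c_1',\ldots,c_n')$ enjoys the same property. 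To repair your route you would need an analogous argument converting a separating property of the three-valued observation structure into an MLN-expressible clause whose grounded premise at $(c_1,\ldots,c_n)$ is observation-determined and provably not matched at $(c_1',\ldots,c_n')$; appealing to first-order definability of automorphism orbits does not by itself provide this.
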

\begin{remark}
We say two random variables $X,X'\in\gH$ have the same posterior if the marginal distributions $P(X|\gO)$ and $P(X'|\gO)$ are the same and, moreover, 
for any sequence of random variables $(X_1,\ldots,X_n)$ in $\gH\setminus\cbr{X}$, there exists a sequence of random variables $(X_1',\ldots,X_n')$ in $\gH\setminus\cbr{X'}$ such that the marginal distributions $P(X,X_1,\ldots,X_n|\gO)$ and $P(X',X_1',\ldots,X_n'|\gO)$ are the same.
\end{remark}

A proof is given in Appendix~\ref{app:thm-proof}. The proof of necessary condition is basically showing that, if $(c_1,\ldots,c_{n})$ and $(c_1',\ldots,c_{n}')$ are NOT isomorphic in $\gG_{\gK}$, we can always define a formula which can make $r(c_1,\ldots,c_{n})$ and $r(c_1',\ldots,c_{n}')$ distinguishable in MLN and have different posterior distributions. It implies an important fact that, to obtain an expressive representation for the posterior,
\begin{itemize}[wide,nolistsep,nosep]
    \item[(1)] either GNN embeddings need to be powerful enough to distinguish non-isomorphic nodes;
    \item[(2)] or the information of formulae / MLN need to be incorporated into the parametrization.
\end{itemize}

The second condition (2) somehow defeats our purpose of using mean field approximation to speed up the inference, so it is currently not considered. Unfortunately, condition (1) is also not satisfied, because existing GNNs are {\it at most} as powerful as color refinement, which is not an exact graph isomorphism test. Besides, node isomorphism mentioned in Theorem~\ref{thm:isomorphism} is even more complex than graph isomorphism because it is a constrained graph isomorphism.

We will interpret the implications of this theorem by an example. Figure~\ref{fig:loopy-knowledge-base} shows a factor graph representation for a knowledge base which leads to the following observations:

$\bullet$ Even though $A$ and $B$ have opposite relations with $E$, i.e., $\texttt{F}(A,E)=1$ but $\texttt{F}(B,E)=0$, $A$ and $B$ are {indistinguishable} in $\Gcal_{\Kcal}$ and thus have the same GNN embeddings, i.e., $\mu_A =\mu_B$.

\begin{wrapfigure}[12]{R}{0.355\textwidth}
    \includegraphics[width=0.35\textwidth]{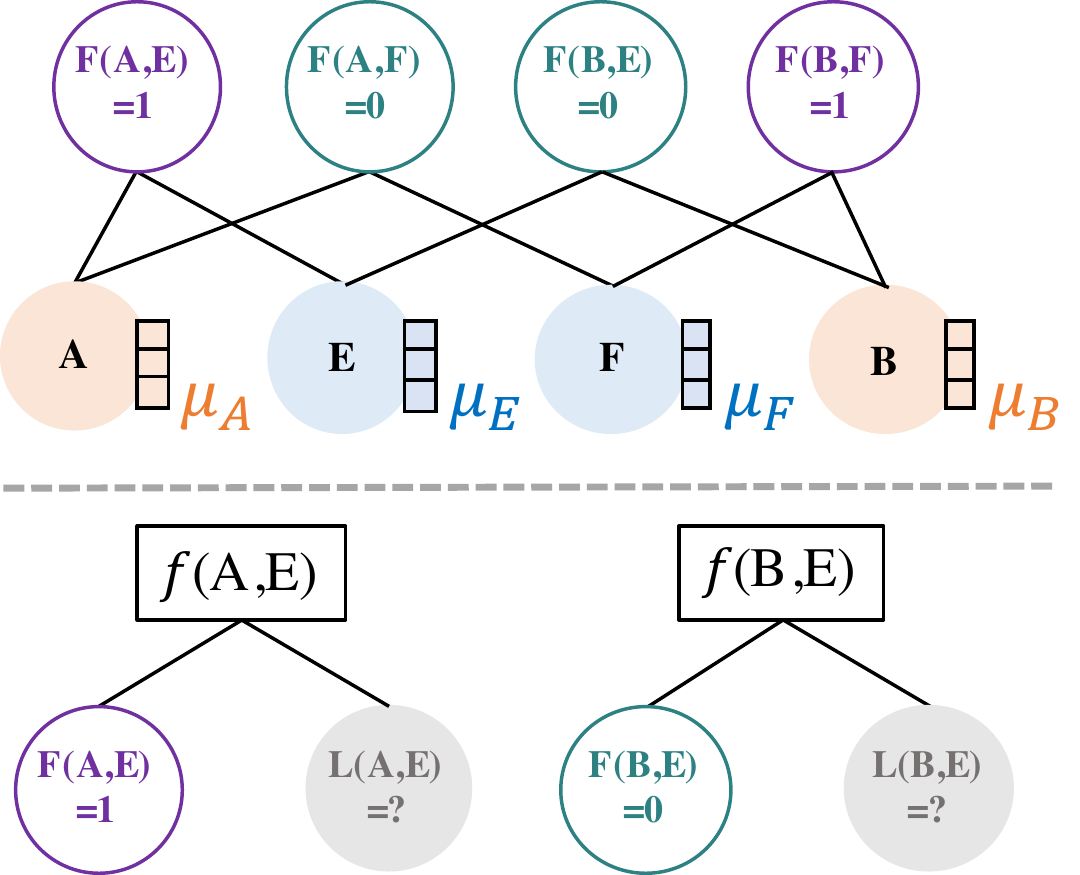}
    \vspace{-1mm}
    \caption{\small {\it Top}: A knowledge base with 0-1-0-1 loop. {\it Bottom}: MLN.}
    \label{fig:loopy-knowledge-base}
\end{wrapfigure}
$\bullet$ Suppose $f(c,c'):=\texttt{F}(c,c')\Rightarrow \texttt{L}(c, c')$ is the only formula in MLN ($\texttt{L}$ is $\texttt{Like}$ and $\texttt{F}$ is $\texttt{Friend}$). $\texttt{L}(A,E)$ and $\texttt{L}(B,E)$ apparently have different posteriors. However, using GNN embeddings, $Q(\texttt{L}(A,E))=\texttt{logistic}\rbr{\texttt{MLP}_{\texttt{L}}(\mu_A,\mu_E)}$ is always identical to $Q(\texttt{L}(B,E))=\texttt{logistic}\rbr{\texttt{MLP}_{\texttt{L}}(\mu_B,\mu_E)}$.

$\bullet$ There exists an isomorphism $\pi_1:(A,E,B,F)\mapsto(B,F,A,$ $E)$ such that $\pi_1(A)=B$, but no isomorphism $\pi$ satisfies both $\pi(A)=B$ and $\pi(E)=E$. Therefore, we see how the isomorphism constraints in Theorem~\ref{thm:isomorphism} make the problem even more complex than graph isomorphism check.

To conclude, it is revealed that node embeddings by GNN alone are not enough to express the posterior in MLN. 
We provide more examples in Appendix \ref{app:counter-examples} to explain that this case is very common and not a rare case. 
In the next section, we will introduce a way of correcting the node embeddings.

\vspace{-1mm}
\section{ExpressGNN: More Expressive GNN with Tunable Embeddings}
\label{sec:expressgnn}
\vspace{-1mm}
\begin{wrapfigure}[4]{R}{0.503\textwidth}
\vspace{-4mm}
\begin{algorithm}[H]
  \DontPrintSemicolon
  \SetKwFunction{Grad}{Grad}
  \SetKwProg{Fn}{Function}{:}{}
  \SetKwFor{uFor}{For}{do}{}
  \SetKwFor{ForPar}{For all}{do in parallel}{}
  \SetKwComment{Comment}{$\triangleright$\ }{}
  \SetCommentSty{mycommfont}
  \SetKwFunction{GNN}{GNN}
$\cbr{\mu_c,\mu_o} \gets$ \GNN{$\gG_{\gK}$}$;$~~~~$\hat{\mu}_c\gets [\mu_c,\omega_c],~\forall c\in\gC;$\;
{\small$Q(r(c_1,\ldots,c_n))$=}$\texttt{logistic}\rbr{ \texttt{MLP}_{r}(\hat{\mu}_{c_1},\ldots,\hat{\mu}_{c_n})}$\;
\caption{$Q(r(c_1,\ldots,c_n))$ with ExpressGNN}\label{algo:q}
\end{algorithm}
\end{wrapfigure}
It is currently challenging to design a new GNN that can check nodes isomorphism, because no polynomial-time algorithm is known even for unconstrained graph isomorphism test~\citep{garey2002computers,babai2016graph}. In this section, we propose a simple yet effective solution. Take Figure~\ref{fig:loopy-knowledge-base} as an example:

To make $Q(\texttt{L}(A,E))$ different from $Q(\texttt{L}(B,E))$, we can simply introduce additional {\bf low dimensional} tunable embeddings $\omega_A$, $\omega_B$, and $\omega_E$ and correct the parametrization as
\begin{align*}
   \texttt{logistic}\rbr{ \texttt{MLP}_{\texttt{L}}([\mu_A,\omega_A],[\mu_E,\omega_E])}~\text{and}~
    \texttt{logistic}\rbr{ \texttt{MLP}_{\texttt{L}}([\mu_B,\omega_B],[\mu_E,\omega_E])}.
\end{align*}
With tunable $\omega_A$ and $\omega_B$,  $Q(\texttt{L}(A,E))$ and $Q(\texttt{L}(B,E))$ can result in different values. In general, we can assign each entity $c\in\gC$ a low-dimensional tunable embedding $\omega_c$ and concatenate it with the GNN embedding $\mu_c$ to represent this entity. We call this variant ExpressGNN and describe the parametrization of $Q$ in Algorithm~\ref{algo:q}.

One can think of ExpressGNN as a hierarchical encoding of entities: GNN embeddings assign similar codes to nodes similar in knowledge graph neighborhoods, while the tunable embeddings provide additional capacity to code variations beyond knowledge graph structures. The hope is %
only a very low dimensional tunable embedding is needed to fine-tune individual differences. Then the total number of parameters in ExpressGNN could be much smaller than using tunable embedding alone. 

ExpressGNN also presents an interesting trade-off between induction and transduction ability. The GNN embedding part allows ExpressGNN to possess some generalization ability to new entities and different knowledge graphs; while the tunable embedding part gives ExpressGNN the extra representation power to perform accurate inference in the current knowledge graph.

\section{Experiments}\label{sec:exp}

\begin{wraptable}[9]{R}{0.36\textwidth}
\vspace{-14.8mm}
    \caption{\small Statistics of datasets.\label{tab:data_set}}
\vspace{-3mm}
\centering
\resizebox{0.35\textwidth}{!}{
    \label{tab:data_set}
    \begin{tabular}{ll@{\hspace{-3mm}}r@{\hspace{2pt}}r@{\hspace{2pt}}r}
    \toprule
    \multicolumn{2}{l}{\multirow{2}{*}{Dataset}}  &\#entity  & \#ground & \#ground \\
    \multicolumn{2}{l}{}                          &         & predicate & formula   \\ \hline
    \multicolumn{2}{l}{FB15K-237}                 & 15K     & 50M       & 679B      \\
    \multicolumn{2}{l}{Cora (avg)}                & 616     & 157K      & 457M     \\ \hline
    \multicolumn{1}{l|}{\multirow{5}{*}{\rotatebox{90}{ Kinship}}} & S1     & 62  & 50K       & 550K      \\
    \multicolumn{1}{l|}{}                         & S2       & 110  & 158K      & 3M        \\
    \multicolumn{1}{l|}{}                         & S3       & 160  & 333K      & 9M        \\
    \multicolumn{1}{l|}{}                         & S4       & 221  & 635K      & 23M       \\
    \multicolumn{1}{l|}{}                         & S5       & 266  & 920K      & 39M       \\ \hline
    \multicolumn{1}{l|}{\multirow{5}{*}{\rotatebox{90}{ UW-CSE}}} & AI  & 300     & 95K       & 73M       \\
    \multicolumn{1}{l|}{}                         & {\small Graphics} & 195 & 70K       & 64M       \\
    \multicolumn{1}{l|}{}                         & {\small Language} & 82  & 15K       & 9M        \\
    \multicolumn{1}{l|}{}                         & {\small Systems}  & 277 & 95K       & 121M      \\
    \multicolumn{1}{l|}{}                         & {\small Theory}   & 174 & 51K       & 54M       \\ 
    \bottomrule
    \end{tabular}
}
\end{wraptable}
Our experiments show that mean field approximation with ExpressGNN
enables efficient and effective probabilistic logic inference and lead to to state-of-the-art results in several benchmark and large datasets.

\textbf{Benchmark datasets.} (i) UW-CSE contains information of students and professors in five department (AI, Graphics, Language, System, Theory)~\citep{richardson2006markov}. (ii) Cora~\citep{singla2005discriminative} contains a collection of citations to computer science research papers. It is split into five subsets according to the research field. (iii) synthetic Kinship datasets contain kinship relationships (e.g., \texttt{Father}, \texttt{Brother}) and resemble the popular Kinship dataset~\citep{denham1973detection}. (iv) FB15K-237 is a large-scale knowledge base~\citep{toutanova2015observed}. Statistics of datasets are provided in Table~\ref{tab:data_set}. See more details of datasets in Appendix~\ref{app:exp-details}.

\subsection{Ablation study and comparison to strong MLN inference methods}
\label{sec:mln_infer}
\vspace{-1mm}
We conduct experiments on Kinship, UW-CSE and Cora, since other baselines can only scale up to these datasets. We use the original logic formulae provided in UW-CSE and Cora, and use hand-coded rules for Kinship. The weights for all formulae are set to 1. We use area under the precision-recall curve (AUC-PR) to evaluate deductive inference accuracy for {\bf predicates never seen during training}, and under the {\bf open-world} setting.\footnote{In Appendix~\ref{app:close_world_infer}, we report the performance under the closed-world setting as in the original works.} See Appendix~\ref{app:exp-details} for more details. Before comparing to other baselines, we first perform an ablation study for ExpressGNN in Cora to explore the trade-off between GNN and tunable embeddings.

\setlength\tabcolsep{3pt}
\begin{wraptable}[11]{R}{0.4\textwidth}
\centering
\vspace{-6.5mm}
\caption{\small AUC-PR for different combinations of GNN and tunable embeddings. Tune~$d$ stands for $d$-dim tunable embeddings and GNN~$d$ stands for $d$-dim GNN embeddings.}
\vspace{-2mm}
\resizebox{0.4\textwidth}{!}{
\label{tab:ablation}
\begin{tabular}{@{}lrrrrr@{}}
\toprule
\multirow{2}{*}{Model} & \multicolumn{5}{c}{Cora} \\ \cmidrule(l){2-6} 
 & \multicolumn{1}{c}{S1} & \multicolumn{1}{c}{S2} & \multicolumn{1}{c}{S3} & \multicolumn{1}{c}{S4} & \multicolumn{1}{c}{S5}\\ \midrule
Tune64 & 0.57 & 0.74 & 0.34 & 0.55 & 0.70 \\
GNN64 & 0.57 & 0.58 & 0.38 & 0.54 & 0.53  \\
GNN64+Tune4 & 0.61 & 0.75 & 0.39 & 0.54 & 0.70\\
\midrule
Tune128 & \textbf{0.62} & 0.76 & 0.42 & \textbf{0.60} & 0.73 \\
GNN128 & 0.60 & 0.59 & 0.45 & 0.55 & 0.61 \\
GNN64+Tune64 & \textbf{0.62} & \textbf{0.79} & \textbf{0.46} & 0.57 & \textbf{0.75}  \\ 
\bottomrule
\end{tabular}}
\end{wraptable}
\textbf{Ablation study.} The number of parameters in GNN is independent of entity size, but it is less expressive. The number of parameters in the tunable component is linear in entity size, but it is more expressive. Results on different combinations of these two components are shown in Table~\ref{tab:ablation}, which are consistent with our analytical result: GNN alone is not expressive enough. %

It is observed that GNN64+Tune4 has comparable performance with Tune64, but consistently better than GNN64. However, the number of parameters in GNN64+Tune4 is $O(64^2 +  4 |\Ccal|)$, while that in Tune64 is $O(64 |\Ccal|)$. A similar result is observed for GNN64+Tune64 and Tune128. Therefore, ExpressGNN as a combination of two types of embeddings can possess the advantages of both having a small number of parameters and being expressive. Therefore, we will use ExpressGNN throughout the rest of the experiments with hyperparameters optimized on the validation set. See Appendix~\ref{app:exp-details} for details.

\textbf{Inference accuracy.} We evaluate the inference accuracy of ExpressGNN against a number of state-of-the-art MLN inference algorithms: (i) MCMC (Gibbs Sampling)~\citep{gilks1995markov, richardson2006markov}; (ii) Belief Propagation (BP)~\citep{yedidia2001generalized}; (iii) Lifted Belief Propagation (Lifted BP)~\citep{singla2008lifted}; (iv) MC-SAT~\citep{poon2006sound}; (v) Hinge-Loss Markov Random Field (HL-MRF)~\citep{bach2015hinge}. 
Results are shown in Table~\ref{table:inference}.

\begin{table*}[h]
\centering
\caption{\small Inference accuracy (AUC-PR) of different methods on three benchmark datasets.}
\vspace{-1.5mm}
\label{table:inference}
\resizebox{0.85\textwidth}{!}{
\begin{tabular}{@{}lccccccccccc@{}}
\toprule
\multirow{2}{*}{Method} & \multicolumn{5}{c}{Kinship} & \multicolumn{5}{c}{UW-CSE} & \multicolumn{1}{c}{Cora} \\ \cmidrule(l){2-12} 
 & S1 & S2 & S3 & S4 & \multicolumn{1}{r|}{S5} & AI & Graphics & Language & Systems & \multicolumn{1}{r|}{Theory} & (avg) \\ 
 \midrule
MCMC & 0.53 & - & - & - & \multicolumn{1}{c|}{-} & - & - & - & - & \multicolumn{1}{c|}{-} & - \\
BP / Lifted BP & 0.53 & 0.58 & 0.55 & 0.55 & \multicolumn{1}{c|}{0.56} & 0.01 & 0.01 & 0.01 & 0.01 & \multicolumn{1}{c|}{0.01} & - \\
MC-SAT & 0.54 & 0.60 & 0.55 & 0.55 & \multicolumn{1}{c|}{-} & 0.03 & 0.05 & 0.06 & 0.02 & \multicolumn{1}{c|}{0.02} & - \\
HL-MRF & \textbf{1.00} & \textbf{1.00} & \textbf{1.00} & \textbf{1.00} & \multicolumn{1}{c|}{-} & 0.06 & 0.06 & 0.02 & 0.04 & \multicolumn{1}{c|}{0.03} & - \\
\midrule
ExpressGNN & 0.97 & 0.97 & 0.99 & 0.99 & \multicolumn{1}{c|}{0.99} & \textbf{0.09} & \textbf{0.19} & \textbf{0.14} & \textbf{0.06} & \multicolumn{1}{c|}{\textbf{0.09}} & \textbf{0.64} \\
\bottomrule
\end{tabular}
}
\vspace{-1mm}
\end{table*}

A hyphen in the entry indicates that the inference is either out of memory or exceeds the time limit (24 hours). Note that since the lifted BP is guaranteed to get identical results as BP~\citep{singla2008lifted}, the results of these two methods are merged into one row. For UW-CSE, the results suggest that ExpressGNN consistently outperforms all baselines. On synthetic Kinship, since the dataset is noise-free, HL-MRF achieves the score of 1 for the first four sets. ExpressGNN yields similar but not perfect scores for all the subsets, presumably caused by the stochastic nature of our sampling and optimization method.

\textbf{Inference efficiency.} The inference time on UW-CSE and Kinship are summarized in Figure~\ref{fig:time} (Cora is omitted as none of the baselines is feasible). As the size of the dataset grows linearly, inference time of all baseline methods grows exponentially. ExpressGNN maintains a nearly constant inference time with the increasing size of the dataset, demonstrating strong scalability. For HL-MRF, while maintaining a comparatively short wall-clock time, it exhibits an exponential increase in the space complexity. Slower methods such as MCMC and BP becomes infeasible for large datasets. ExpressGNN outperforms all baseline methods by at least one or two orders of magnitude. 
\begin{figure}[ht!]
    \centering
    \begin{tabular}[b]{cc}
    	\begin{subfigure}[]{0.52\textwidth}
        	\centering
            \resizebox{\textwidth}{!}{
            \begin{tabular}{@{}lccccc@{}}
            \toprule
            \multirow{2}{*}{Method} & \multicolumn{5}{c}{Inference Time (minutes)} \\ \cmidrule(l){2-6} 
            \multicolumn{1}{c}{} & AI & Graphics & Language & Systems & Theory \\ \midrule
            MCMC & $>$24h & $>$24h & $>$24h & $>$24h & $>$24h \\
            BP & 408 & 352 & 37 & 457 & 190 \\
            Lifted BP & 321 & 270 & 32 & 525 & 243 \\
            MC-SAT & 172 & 147 & 14 & 196 & 86 \\
            HL-MRF & 135 & 132 & 18 & 178 & 72 \\ \midrule
            ExpressGNN & \textbf{14} & \textbf{20} & \textbf{5} & \textbf{7} & \textbf{13} \\
            \bottomrule
            \end{tabular}
            }
    	\end{subfigure}
    	\hfill
        \begin{subfigure}[]{0.46\textwidth}
            \centering
        	\includegraphics[width=0.80\textwidth]{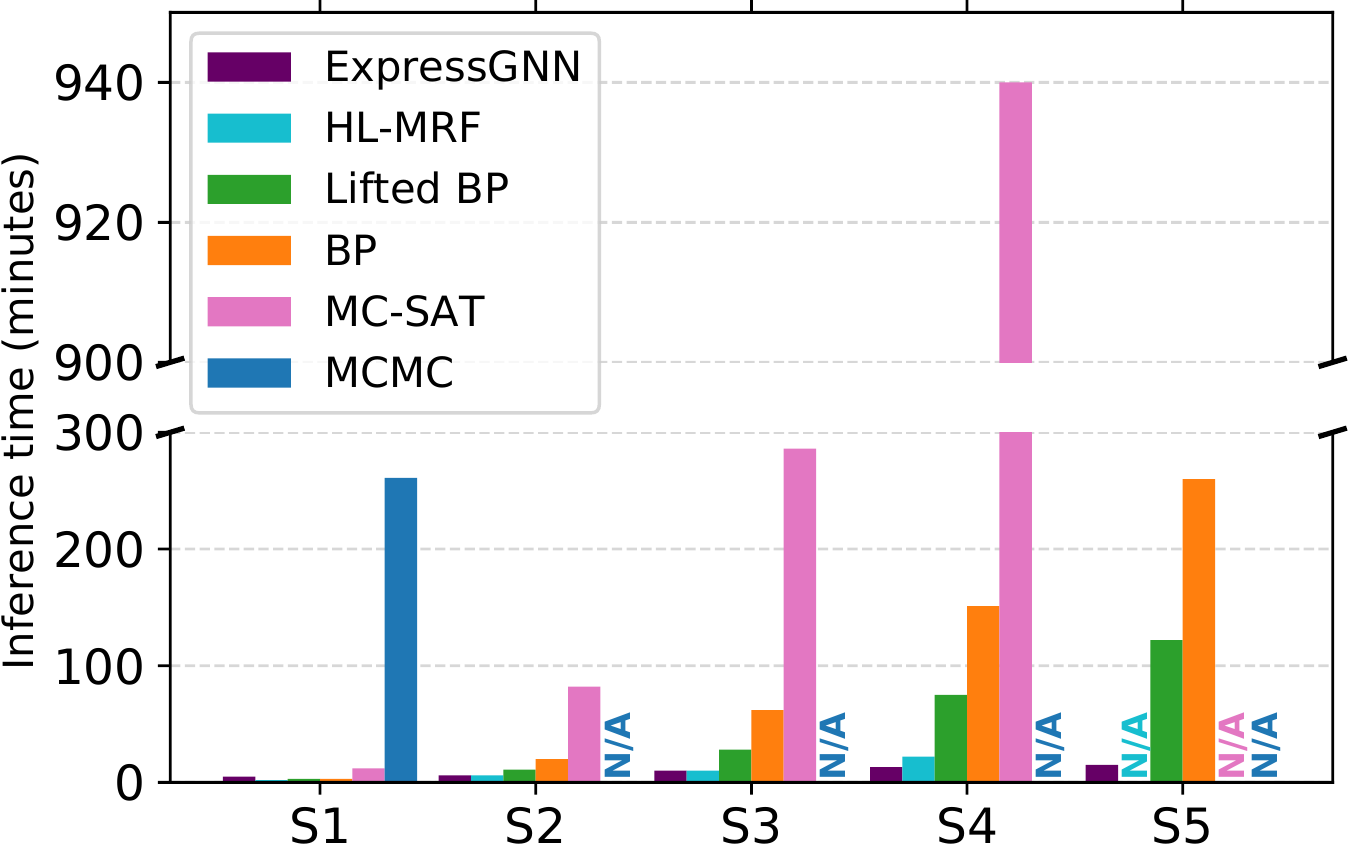}
        \end{subfigure}
    \end{tabular}
    \vspace{-3mm}
    \caption{\small {\it Left} / {\it Right}: Inference time on UW-CSE / Kinship respectively. N/A indicates the method is infeasible.}
    \label{fig:time}
\end{figure}

\vspace{-2mm}
\subsection{Large-scale knowledge base completion}
\vspace{-1mm}
We use a large-scale dataset, FB15K-237~\citep{toutanova2015observed}, to show the scalability of ExpressGNN. Since none of the aforementioned probabilistic inference methods are tractable on this dataset, we compare with several state-of-the-art supervised methods for knowledge base completion: (i) Neural Logic Programming (Neural LP) \citep{yang2017differentiable}; (ii) Neural Tensor Network (NTN)~\citep{socher2013reasoning}; (iii) TransE~\citep{bordes2013translating}. In these knowledge completion experiments, we follow the setting in~\citep{qu2019gmnn} to add a discriminative loss $\sum_{r(a_r)\in\mathcal{O}} \log Q(r(a_r))$ to better utilize observed data. We use Neural LP to generate candidate rules and pick up those with high confidence scores for ExpressGNN. See Appendix~\ref{app:rules} for examples of logic formulae used in experiments. For competitor methods, we use default tuned hyperparameters, which can reproduce the experimental results reported in their original works.

\textbf{Evaluation.} Given a query, e.g., $r(c,c')$, the task is to rank the query on top of all possible grounding of $r$. For evaluation, we compute the Mean Reciprocal Ranks (MRR), which is the average of the reciprocal rank of all the truth queries, and Hits@10, which is the percentage of truth queries that are ranked among top 10. Following the protocol proposed in~\citep{yang2017differentiable, bordes2013translating}, we also use \textit{filtered} rankings.

\begin{table}[h]
\vspace{-2mm}
    \centering
    \begin{minipage}{.65\linewidth}
        \centering
        \caption{Comparison on FB15K-237 with varied training set size.}
        \label{tab:scale}
        \resizebox{\textwidth}{!}{
        \begin{tabular}{@{}lcccccccccc@{}}
        \toprule
        {\multirow{2}{*}{Model}} & \multicolumn{5}{c}{MRR} &  \multicolumn{5}{c}{Hits@10} \\ \cmidrule(lr){2-6} \cmidrule(l){7-11}
        \multicolumn{1}{c}{} & \multicolumn{1}{c}{0\%} & \multicolumn{1}{c}{5\%} & \multicolumn{1}{c}{10\%} & \multicolumn{1}{c}{20\%} &
        \multicolumn{1}{c}{100\%} &  \multicolumn{1}{c}{0\%} & \multicolumn{1}{c}{5\%} & \multicolumn{1}{c}{10\%} &  \multicolumn{1}{c}{20\%} & \multicolumn{1}{c}{100\%} \\ \midrule
        Neural LP & 0.01 & 0.13 & 0.15 & 0.16 & 0.24 & 1.5 & 23.2 & 24.7 & 26.4 & 36.2 \\
        NTN & 0.09 & 0.10 & 0.10 & 0.11 & 0.13 & 17.9 & 19.3 & 19.1 & 19.6 & 23.9 \\
        TransE & 0.21 & 0.22 & 0.22 & 0.22 & 0.28 & 36.2 & 37.1 & 37.7 & 38.0 & 44.5 \\ \midrule
        ExpressGNN & \textbf{0.42} & \textbf{0.42} & \textbf{0.42} & \textbf{0.44} & \textbf{0.45} & \textbf{53.1} & \textbf{53.1} & \textbf{53.3} & \textbf{55.2} & \textbf{57.3} \\ \bottomrule
        \end{tabular}
        }
    \end{minipage}\hfill
    \begin{minipage}{.29\linewidth}
        \centering
        \caption{Inductive knowledge completion on FB15K-237.}
        \label{tab:transfer}
        \resizebox{\textwidth}{!}{
        \begin{tabular}{@{}lcc@{}}
        \toprule
        Model & \multicolumn{1}{c}{MRR} & \multicolumn{1}{c}{Hits@10} \\ \midrule
        Neural LP & 0.01 & 2.7 \\
        NTN & 0.00 & 0.0 \\
        TransE & 0.00 & 0.0 \\ \midrule
        ExpressGNN & \textbf{0.18} & \textbf{29.3} \\ \bottomrule
        \end{tabular}
        }
    \end{minipage}
\end{table}

\textbf{Data efficiency in transductive setting.} We demonstrate the data efficiency of using logic formula and compare ExpressGNN with aforementioned supervised approaches. More precisely, we follow \citep{yang2017differentiable} to split the knowledge base into facts / training / validation / testing sets, vary the size of the training set from 0\% to 100\%, and feed the varied training set with the same complete facts set to models for training. Evaluations on testing set are given in Table~\ref{tab:scale}. It shows with small training data ExpressGNN can generalize significantly better than supervised methods. With more supervision, supervised approaches start to close the gap with ExpressGNN.
This also suggests that high confidence logic rules indeed help us generalize better under small training data.

\textbf{Inductive ability.} To demonstrate the inductive learning ability of ExpressGNN, we conduct experiments on FB15K-237 where training and testing use disjoint sets of relations. To prepare data for such setting, we first randomly select a subset of relations, and restrict the test set to relations in this selected subset, which is similar to~\cite{bordes2013translating}.
Table~\ref{tab:transfer} shows the experimental results. As expected, in this inductive setting, supervised transductive learning methods such as NTN and TransE drop to zero in terms of MRR and Hits@10\footnote{The MRR and Hits@10 are both smaller than 0.01 for NTN and TransE.}. Neural LP performs inductive learning and generalizes well to new entities in the test set as discussed in \cite{yang2017differentiable}. However, in our inductive setting, where all the relations in the test set are new, Neural LP is not able to achieve good performance as reported in Table~\ref{tab:transfer}.
In contrast, ExpressGNN can directly exploit first-order logic and is much less affected by the new relations, and achieve reasonable performance at the same scale as the non-inductive setting.

\vspace{-1mm}
\section{Conclusion}
\vspace{-1mm}

Our analysis shows that GNN while being suitable for probabilistic logic inference in MLN, is not expressive enough. Motivated by this analysis, we propose ExpressGNN, an integrated GNN and tunable embedding approach, which has a trade-off between model size and expressiveness, and leads to scalable and effective logic inference in both transductive and inductive experiments. ExpressGNN opens up many possibilities for future research such as formula weight learning with variational inference, incorporating entity features and neural tensorized logic formulae, and addressing challenging datasets such as GQA~\citep{hudson2019gqa}.

\clearpage
\newpage

\clearpage
\newpage
\appendix

\noindent {\LARGE\bf Appendix}

\section{Proof of Theorems}
\label{app:thm-proof}
\thmAugKnowBase*

\begin{proof}For simplicity, we use $\gG$ and $\gG'$ to represent $\gG_{\gK}$ and $\gG_{\overline{\gK}}$ in this proof.
\begin{itemize}
    \item[2.] Let us first assume statement 1 is true and prove statement 2. 
    
    The neighbors of $H:=\sbr{r(c_1,\ldots,c_{n})=v}$ and $H':=\sbr{r(c_1',\ldots,c_{n}')=v}$ are 
    \begin{align}
        \gN\rbr{H}=\cbr{\rbr{c_i,i} : i =1,\ldots,n}~\text{and}~\gN\rbr{H'}=\cbr{\rbr{c_i',i} : i =1,\ldots,n}
    \end{align}
    where $i$ represents the edge type. It is easy to see that $H$ and $H'$ are indistinguishable in $\gG'$ if and only if $c_i'$ and $c_i$ are indistinguishable in $\gG'$ for $i=1,\ldots,n$. By statement 1, $c_i'$ and $c_i$ are indistinguishable in $\gG'$ if and only if $c_i'$ and $c_i$ are indistinguishable in $\gG$. Hence, statement 2 is true. Now it remains to prove statement 1.
    \item[1.] ($\Leftarrow$) If $c$ and $c'$ are distinguishable in $\gG$, it is easy to see $c$ and $c'$ are also distinguishable in the new graph $\gG'$. The reason is that the newly added nodes $\gH$ are of different types from the observed nodes $\gO$ in $\gG$, so that these newly added nodes can not make two distinguishable nodes to become indistinguishable. 
    
    ($\Rightarrow$) Assume that $c$ and $c'$ are indistinguishable in $\gG$, we will prove they are indistinguishable in $\gG'$ using MI (mathematical induction). The idea is to construct the new graph $\gG'$ by connecting the unobserved nodes in $\gH$ in a particular order. More specifically, we first connect all unobserved grounded predicates $\sbr{r(c_1,\ldots,c_{n})=?}\in \gH$ to their first arguments $c_1$, and the resulting graph is denoted by $\gG^{(1)}$. Then we can connect all $\sbr{r(c_1,c_2,\ldots,c_{n})=?}\in \gH$ to their second arguments $c_2$ and denote the resulting graph by $\gG^{(2)}$. In this way, we obtain a sequence of graphs $\cbr{\gG^{(k)}}_{k=1}^R$ where $R:=\max\cbr{n: r\in\gR}$ is the maximal number of arguments. It is clear that $\gG'= \gG^{(R)}$. In the following, we will use MI to prove that for all $k=1,\ldots,R$, if $c$ and $c'$ are indistinguishable in $\gG$, then they are indistinguishable in $\gG^{(k)}$.
    
    \underline{Proof of {\bf (MI 1)}}:
    
    Consider any predicate $r\in\gR$. For any two indistinguishable nodes $c,c'$ in $\gG$, $\#\cbr{ r(c,\ldots):~\text{observed} } = \#\cbr{ r(c',\ldots):~\text{observed} }$. Hence, it is obvious that
    \begin{align}
        \#\cbr{ r(c,\ldots):~\text{unobserved} } = \#\cbr{ r(c',\ldots):~\text{unobserved} }=M.
    \end{align}
    Before connected to the graph $\gG$, the unobserved nodes $\cbr{r(\cdot):~\text{unobserved}}$ are all indistinguishable because they are of the same node-type. Now we connect all these unobserved nodes to its first argument. Then $c$ is connected to $\cbr{ r(c,\ldots):~\text{unobserved} }$ and $c'$ is connected to $\cbr{ r(c',\ldots):~\text{unobserved} }$. Since both $c$ and $c'$ are connected $M$ unobserved nodes and these nodes are indistinguishable, $c$ and $c'$ remain to be indistinguishable. Also, after connected to its first argument, $r(c,\ldots)$ and $r(c',\ldots)$ are indistinguishable if and only if $c$ and $c'$ are indistinguishable, which is obvious. 
    
    Similarly, we can connect all unobserved grounded predicates to its first argument. In the resulting graph, two nodes are indistinguishable if they are indistinguishable in $\gG$.
    
    \underline{Assumption {\bf (MI $k$)}}:
    
    Assume that after connecting all unobserved grounded predicates to their first $k$ arguments, the constant nodes in the resulting graph, $\gG^{(k)}$, are indistinguishable if they are indistinguishable in $\gG$. 
    
    \underline{Proof of {\bf (MI $k+1$)}}: 
    
    The constants $\gC$ in $\gG$ can be partitioned into $N$ groups $\gC = \bigcup_{i=1}^N\gC^{(i)}$, where the constants in the same group $\gC^{(i)}$ are indistinguishable in $\gG$ (and also indistinguishable in $\gG^{(k)}$).

    Consider a predicate $r^*\in\gR$. The set of unobserved grounded predicates where the $(k+1)$-th argument is $c$ can be written as
    \begin{align}
       & \cbr{r^*(\ldots,c_{k+1}=c,\ldots):~\text{unobserved} }\\
       =&\bigcup_{i_1=1}^N \cdots \bigcup_{i_k=1}^N \cbr{r^*(c_1,\ldots,c_k,c_{k+1}=c,\ldots): c_i\in\gC^{(i_1)},\ldots,c_k\in\gC^{(i_k)},~\text{unobserved}}.
    \end{align}
    Similar to the arguments in {\bf (MI 1)}, for any two indistinguishable nodes $c$ and $c'$, for any fixed sequence of groups $i_1,\ldots,i_k$, the size of the following two sets are the same:
     \begin{align}
       M(i_1,\ldots,i_k)= &\Scale[0.9]{\#\cbr{r^*(c_1,\ldots,c_k,c_{k+1}=c,\ldots): c_i\in\gC^{(i_1)},\ldots,c_k\in\gC^{(i_k)},~\text{unobserved}}}\\
        =&\Scale[0.9]{\#\cbr{r^*(c_1,\ldots,c_k,c_{k+1}=c',\ldots): c_i\in\gC^{(i_1)},\ldots,c_k\in\gC^{(i_k)},~\text{unobserved}}}.
    \end{align}
    Also, all grounded predicates in the above two sets are indistinguishable in $\gG^{(k)}$ because their first $k$ arguments are indistinguishable. Hence, these are two sets of $M(i_1,\ldots,i_k)$ many indistinguishable nodes. In conclusion, the two sets $\cbr{r^*(\ldots,c_{k+1}=c,\ldots):~\text{unobserved} }$ and $\cbr{r^*(\ldots,c_{k+1}=c',\ldots):~\text{unobserved} }$ are indistinguishable in $\gG^{(k)}$ if $c$ and $c'$ are indistinguishable. 
    
    Now we connect all unobserved $r^*(\cdot)$ to their $(k+1)$-th arguments. Then the constant node $c$ is connected to $\cbr{r^*(\ldots,c_{k+1}=c,\ldots):~\text{unobserved} }$ and $c'$ is connected to $\cbr{r^*(\ldots,c_{k+1}=c',\ldots):~\text{unobserved} }$. Since these two sets are indistinguishable, then $c$ and $c'$ remain to be indistinguishable. 
    
    Similarly, for other predicates $r\in\gR$, we can connect all unobserved grounded predicates $r(\cdot)$ to their $(k+1)$-th arguments. In the resulting graph, $\gG^{(k+1)}$, any pair of two nodes will remain indistinguishable if they are indistinguishable in $\gG$.
    
\end{itemize}

\end{proof}

\thmautomorphism*
\begin{proof} A graph isomorphism from $G$ to itself is called automorphism, so in this proof, we will use the terminology - automorphism - to indicate such a self-bijection. 

$(\Longleftarrow)$ We first prove the sufficient condition:
\begin{quote}
    If $\exists$ automorphism $\pi$ on the graph $\gG_{\gK}$ such that $\pi(c_i)=c_i',\forall i =1,...,n$, then for any $r\in\gR$, $r(c_1,\ldots,c_{n})$ and $r(c_1',\ldots,c_{n}')$ have the same posterior in any MLN.
\end{quote}
MLN is a graphical model that can also be represented by a factor graph $\text{MLN}= (\gO\cup \gH,\gF_g, \gE) $ where grounded predicates (random variables) and grounded formulae (potential) are connected. We will show that $\exists$ an automorphism $\phi$ on MLN such that $\phi\rbr{r(c_1,\ldots,c_{n})}=r(c_1',\ldots,c_{n}')$. Then the sufficient condition is true. This automorphism $\phi$ is easy to construct using the automorphism $\pi$ on $\gG_{\gK}$. More precisely, we define $\phi:(\gO\cup \gH,\gF_g)\rightarrow (\gO\cup \gH,\gF_g)$ as
\begin{align}
    \phi(r(a_r)) = r(\pi(a_r)),~~\phi(f(a_f)) = f(\pi(a_f)),
\end{align}
for any predicate $r\in\gR$, any assignments $a_r$ to its arguments, any formula $f\in \gF$, and any assignments $a_f$ to its arguments. It is easy to see $\phi$ is an automorphism:
\begin{enumerate}[leftmargin=*,nolistsep]
    \item Since $\pi$ is a bijection, apparently $\phi$ is also a bijection. 
    \item The above definition preserves the biding of the arguments. $r(a_r)$ and $f(a_f)$ are connected if and only if $\phi(r(a_r))$ and $f(\pi(a_f))$ are connected.
    \item Given the definition of $\pi$, we know that $r(a_r)$ and $r(\pi(a_r))$ have the same observation value. Therefore, in MLN, $\texttt{NodeType}(r(a_r))=\texttt{NodeType}(\phi(r(a_r)))$.
\end{enumerate}
This completes the proof of the sufficient condition.

$(\Longrightarrow)$ To prove the necessary condition, it is equivalent to show given the following assumption
    \begin{quote}
       {\bf (A 1)}: there is no automorphism $\pi$ on the graph $\gG_{\gK}$ such that $\pi(c_i)=c_i',\forall i =1,...,n$,
   \end{quote}
the following statement is true:
   \begin{quote}
       there must exists a MLN and a predicate $r$ in it such that $r(c_1,\ldots,c_{n})$ and $r(c_1',\ldots,c_{n}')$ have different posterior.
   \end{quote}
   Before showing this, let us first introduce the {\bf factor graph representation of a single logic formula} $f$.

\fbox{
  \parbox{0.9\columnwidth}{
\begin{multicols}{2}
 A logic formula $f$ can be represented as a factor graph, $\Gcal_f = (\Ccal_f, \Rcal_f, \Ecal_f)$, where nodes on one side of the graph is the set of distinct constants $\Ccal_f$ needed in the formula, while nodes on the other side is the set of predicates $\Rcal_f$ used to define the formula. The set of edges, $\Ecal_f$, will connect constants to predicates or predicate negation. That is, an edge 
\begin{quote}
    $e=(c,r,i)$ between node $c$ and predicate $r$ exists, if the predicate $r$ use constant $c$ in its $i$-th argument.   
\end{quote}
We note that the set of distinctive constants used in the definition of logic formula are templates where actual constant can be instantiated from $\Ccal$. An illustration of logic formula factor graph can be found in Figure~\ref{fig:formula_graph}.
Similar to the factor graph for the knowledge base, we also differentiate the type of edges by the position of the argument.
\begin{Figure}
    \includegraphics[width=\linewidth]{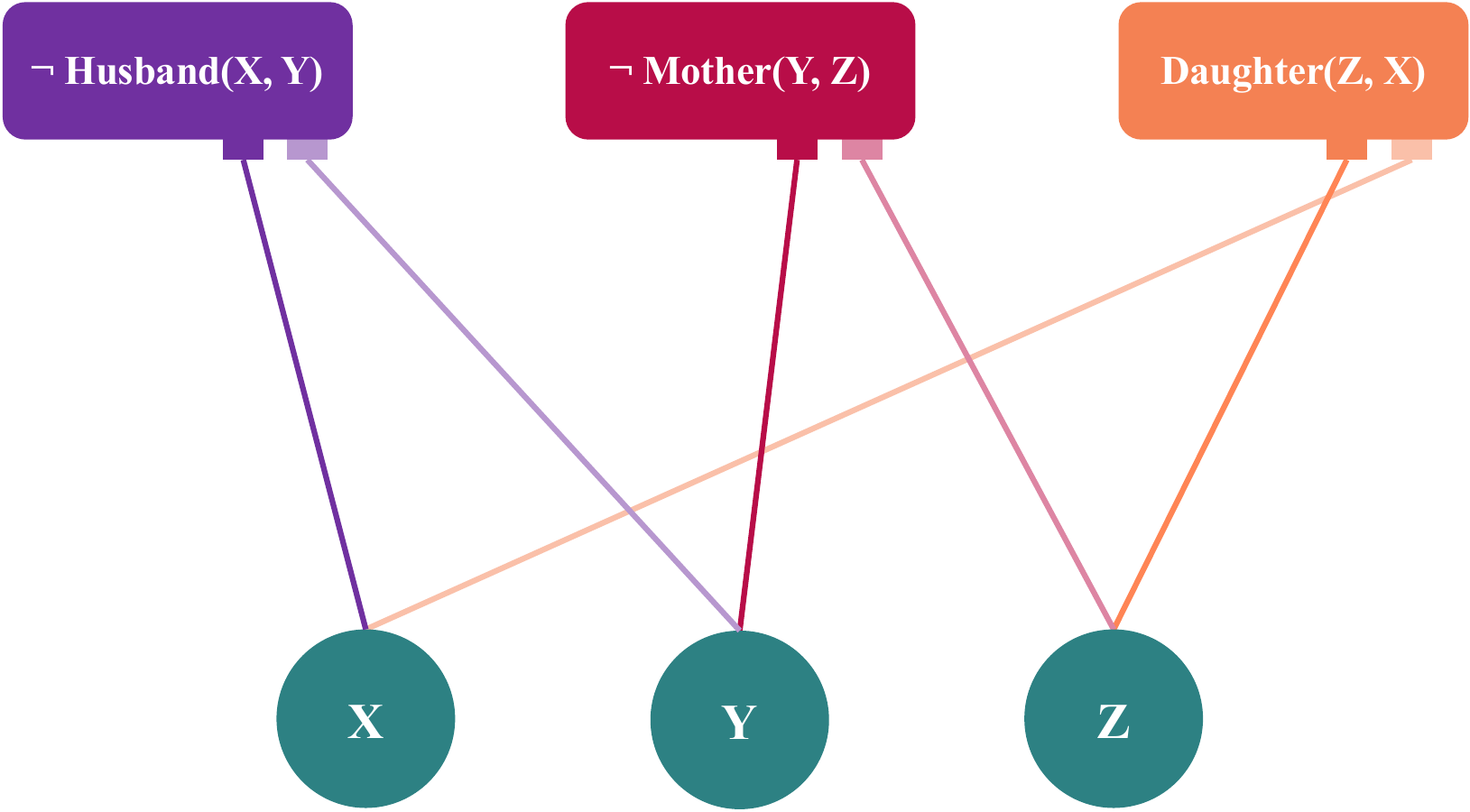}
    \captionof{figure}{An example of factor graph for the logic formula $\lnot \texttt{Husband(X,Y)} \lor \lnot \texttt{Mother(Y,Z)} \lor \texttt{Daughter(Z,X)}.$ }
    \label{fig:formula_graph}
\end{Figure}
\end{multicols}
}}

   Therefore, every single formula can be represented by a factor graph. We will construct a factor graph representation to define a particular formula, and show that the MLN induced by this formula will result in different posteriors for $r(c_1,\ldots,c_{n})$ and $r(c_1',\ldots,c_{n}')$. The factor graph for the formula is constructed in the following way (See Figure~\ref{fig:app-counter2-2} as an example of the resulting formula constructed using the following steps):

\begin{enumerate}
  
    \item[(i)]Given the above assumption {\bf (A 1)}, we \underline{claim} that:
        
        $\exists$ a subgraph $\gG^*_{c_{1:n}}=\rbr{\gC^*_{c}, \gO^*_{c},\gE^*_{c}}\subseteq\gG_{\gK}$ such that all subgraphs $\gG_{c_{1:n}'}=\rbr{\gC_{c'}, \gO_{c'},\gE_{c'}}\subseteq \gG_{\gK}$ satisfy:
        \begin{quote}
           {\bf (Condition)} if there exists an isomorphism $\phi:\gG^*_{c_{1:n}}\rightarrow \gG_{c_{1:n}'}$ satisfying $\phi(c_i)=c_i',\forall i=1,\ldots,n$ after the observation values are IGNORED (that is, $\sbr{r_j(\cdots)=0}$ and $\sbr{r_j(\cdots)=1}$ are treated as the SAME type of nodes), then the set of fact nodes (observations) in these two graphs are different (that is, $\gO^*_{c} \neq \gO_{c'}$).
        \end{quote}
        The proof of this \underline{claim} is given at the end of this proof.
        \item[(ii)] Next, we use $\gG^*_{c_{1:n}}$ to define a formula $f$. We first initialize the definition of the formula value as 
        \begin{align}\label{eq:construct-f}
            f(c_1,\ldots,c_n,\tilde{c}_1,\ldots,\tilde{c}_n) = \rbr{ \land \cbr{\tilde{r}(a_{\tilde{r}}): \tilde{r}(a_{\tilde{r}})\in  \gG^*_{c_{1:n}}}}\Rightarrow r(c_1,\ldots,c_n).
        \end{align}
        Then, we change $\tilde{r}(a_{\tilde{r}})$ in this formula to the negation $\lnot\tilde{r}(a_{\tilde{r}})$ if the observed value of $\tilde{r}(a_{\tilde{r}})$ is 0 in $\gG^*_{c_{1:n}}$.
    \end{enumerate}
    We have defined a formula $f$ using the above two steps. Suppose the MLN only contains this formula $f$. Then 
    \begin{quote}
        the two nodes $r(c_1,\ldots,c_{n})$ and $r(c_1',\ldots,c_{n}')$ in this MLN must be distinguishable.
    \end{quote}
     The reason is, in MLN, $r(c_1,\ldots,c_{n})$ is connected to a grounded formula $f(c_1,\ldots,c_{n},\tilde{c}_1,\ldots,\tilde{c}_n)$, whose factor graph representation is $\gG_{c_{1:n}}^*\cup r(c_1,\ldots,c_{n})$.  In this formula, all variables are observed in the knowledge base $\gK$ except for $r(c_1,\ldots,c_{n})$ and and the observation set is $\gO_c^*$. The formula value is 
     \begin{align}\label{eq:ground-f}
          f(c_1,\ldots,c_{n},\tilde{c}_1,\ldots,\tilde{c}_n)=\rbr{ 1\Rightarrow r(c_1,\ldots,c_n)}.
     \end{align}
     Clarification: \Eqref{eq:construct-f} is used to {\bf define} a formula and $c_i$ in this equation can be replaced by other constants, while \Eqref{eq:ground-f} represents a {\bf grounded} formula whose arguments are exactly  $c_1,\ldots,c_{n},\tilde{c}_1,\ldots,\tilde{c}_n$.
     Based on {\bf (Condition)}, there is NO formula  $f(c_1',\ldots,c_{n}',\tilde{c}_1',\ldots,\tilde{c}_n')$ that contains $r(c_1',\ldots,c_{n}')$ has an observation set the same as $\gO_c^*$. Therefore, $r(c_1,\ldots,c_{n})$ and $r(c_1',\ldots,c_{n}')$ are distinguishable in this MLN.
    
\underline{Proof of claim}:

We show the existence by constructing the subgraph $\gG_{c_{1:n}}^*\subseteq \gG_{\gK}$ in the following way:
    \begin{enumerate}[wide]
    \item[(i)] First, we initialize the subgraph as $\gG_{c_{1:n}}^*:=\gG_{\gK}$. 
    Given assumption {\bf (A 1)} stated above, it is clear that 
    \begin{quote}
        {\bf (S 1)} $\forall$ subgraph $\gG'\subseteq \gG_{\gK}$, there is no isomorphism $\pi:\gG_{c_{1:n}}^*\rightarrow \gG'$ satisfying $\pi(c_i)=c_i',\forall i =1,\ldots,n$.
    \end{quote}
    \item[(ii)] Second, we need to check wether the following case occurs: 
    \begin{quote}
        {\bf (C 1)} $\exists$ a subgraph $\gG'=(\gC',\gO',\gE')$ such that (1) there EXISTS an isomorphism $\phi:\gG_{c_{1:n}}^*\rightarrow \gG'$ satisfying $\phi(c_i)=c_i',\forall i=1,\ldots,n$ after the observation values are IGNORED (that is, $\sbr{r_j(\cdots)=0}$ and $\sbr{r_j(\cdots)=1}$ are treated as the same type of nodes); and (2) the set of factor nodes (observations) in these two graphs are the same (that is, $\gO^*_c = \gO'$).
    \end{quote}
   
   \item[(iii)] Third, we need to modify the subgraph if the case {\bf (C 1)} is observed.
    Since $\left|\gG^*_{c_{1:n}} \right|\geq \left|\gG' \right|$, the only subgraph that will lead to the case ${\bf (C 1)}$ is the maximal subgraph $\gG^*_{c_{1:n}}$. The isomorphism $\phi$ is defined by ignoring the observation values, while the isomorphism $\pi$ in {\bf (S 1)} is not ignoring them. Thus, 
   
   \begin{quote}
        {\bf (S 1)} and {\bf (C 1)} $\Longrightarrow$  $\exists$ a set of nodes $S:=\cbr{\sbr{r_j(a^{(1)})=0},\ldots, \sbr{r_j(a^{(n)})=0}}$ such that 
   for any isomorphism $\phi$ satisfying the conditions in {\bf (C 1)}, the range $\phi(S)$ contains at least one node $\sbr{r_j(\cdot)=1}$ which has observation value 1.
   \end{quote}
   Otherwise, it is easy to see a contradiction to statement {\bf (S 1)}.
   \begin{quote}
            {\bf (M 1)} Modify the subgraph by $\gG^*_{c_{1:n}} \longleftarrow \gG^*_{c_{1:n}} \setminus S $. The nodes (and also their edges) in the set $S:=\cbr{\sbr{r_j(a^{(1)})=0},\ldots, \sbr{r_j(a^{(n)})=0}}$ are removed. 
    \end{quote}
    For the new subgraph $\gG^*_{c_{1:n}} $ after the modification {\bf (M 1)}, the case {\bf (C 1)} will not occur. Thus, we've obtained a subgraph that satisfies the conditions stated in the claim. Finally, we can remove the nodes that are not connected with $\cbr{c_1,\ldots,c_{n}}$ (that is, there is no path between this node and any one of $\cbr{c_1,\ldots,c_{n}}$). The remaining graph is connected to $\cbr{c_1,\ldots,c_{n}}$ and still satisfies the conditions that we need.
    \end{enumerate}
    
\end{proof}
\newpage
\section{Counter Examples}
\label{app:counter-examples}
We provide more examples in this section to show that it is more than a rare case that GNN embeddings alone are not expressive enough.

\subsection{Example 1}
\begin{figure}[h!]
    \centering
    \includegraphics[width=0.5\textwidth]{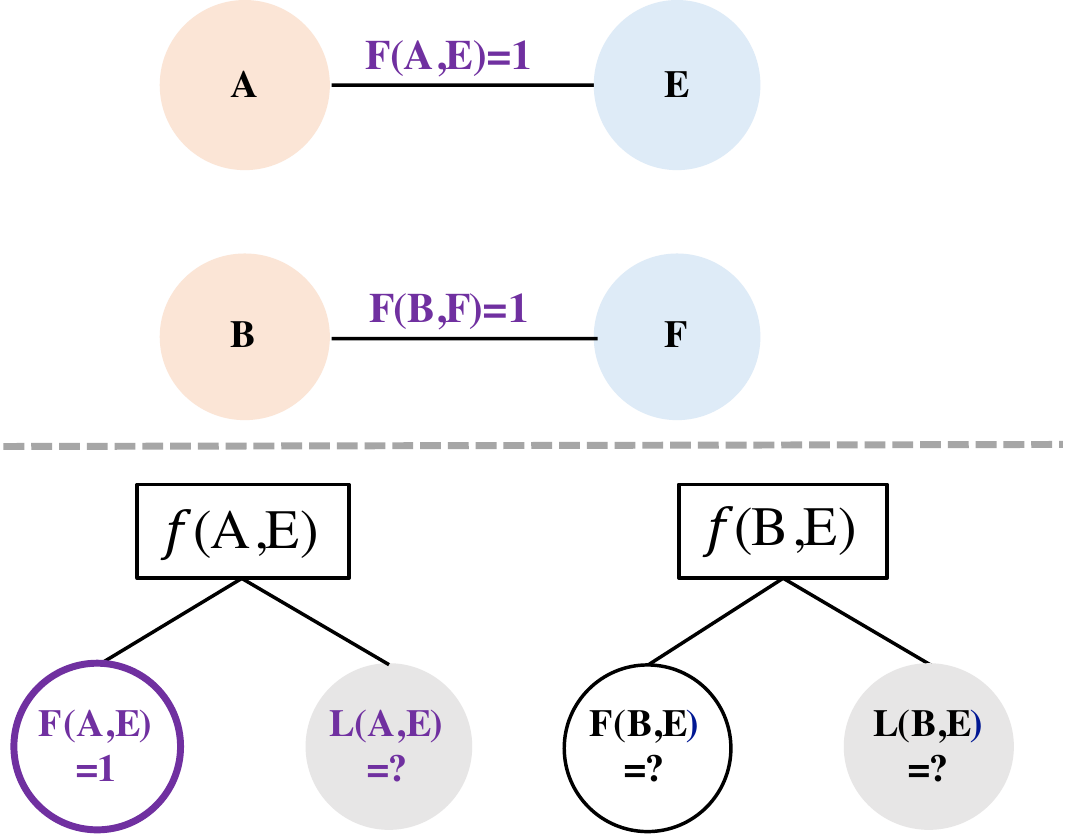}
    \caption{Example 1. {\it Top}: Knowledge base. {\it Bottom}: MLN}
    \label{fig:app-counter1}
\end{figure}

Unlike the example shown in main text, where \texttt{A} and \texttt{B} have OPPOSITE relation with \texttt{E}, Figure~\ref{fig:app-counter1} shows a very simple example where \texttt{A} and \texttt{B} have exactly the same structure which makes \texttt{A} and \texttt{B} indistinguishable and isomorphic. However, since (\texttt{A},\texttt{E}) and (\texttt{B},\texttt{E}) are not isomorphic, it can be easily seen that $\texttt{L}(\texttt{A},\texttt{E})$ has different posterior from $\texttt{L}(\texttt{B},\texttt{E})$.

\subsection{Example 2}
\begin{figure}[h!]
    \centering
    \includegraphics[width=0.5\textwidth]{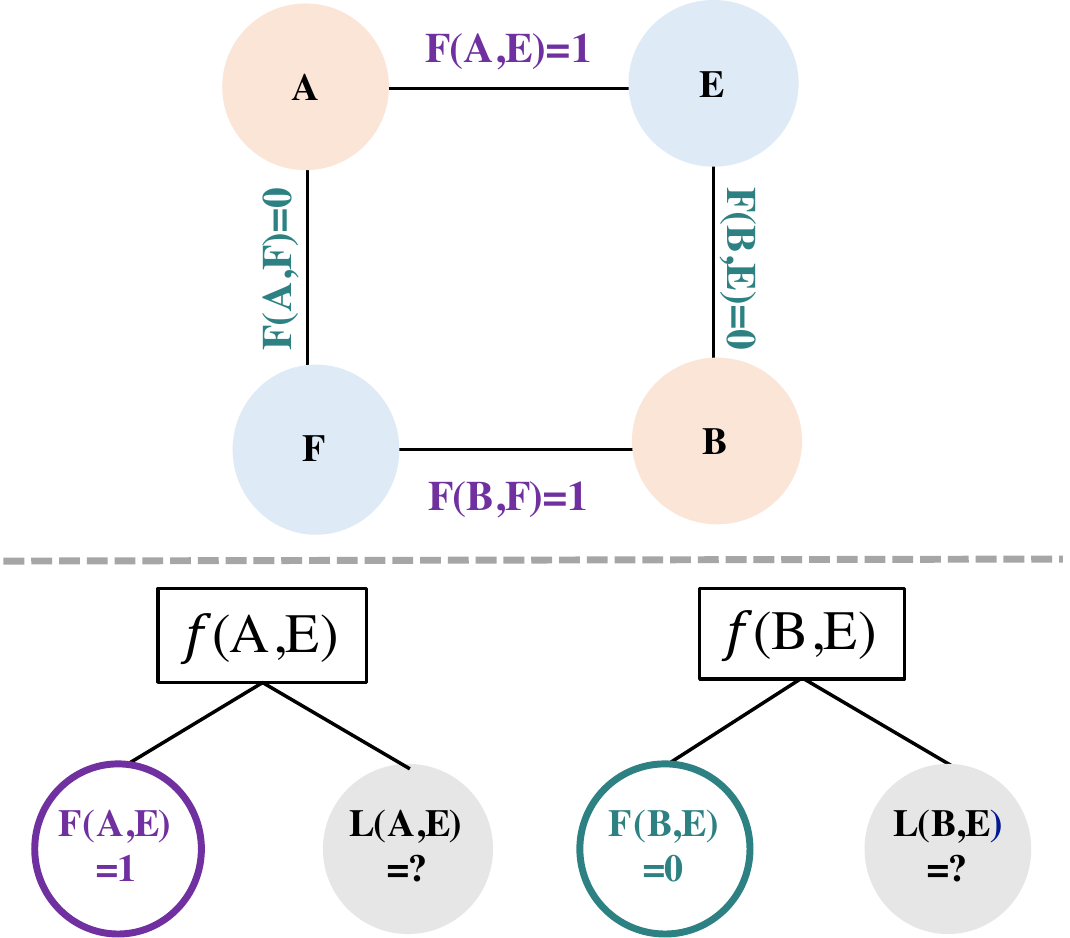}
    \caption{The same example as in Figure~\ref{fig:loopy-knowledge-base}. {\it Top}: Knowledge base. {\it Bottom}: MLN}
    \label{fig:app-counter2-1}
\end{figure}
Figure~\ref{fig:app-counter2-1} shows an example which is the same as in Figure~\ref{fig:loopy-knowledge-base}. However, in this example, it is already revealed in the knowledge base that $(\texttt{A},\texttt{E})$ and $(\texttt{B},\texttt{E})$ have different local structures as they are connected by different observations. That is,  $\rbr{\texttt{A},\sbr{\texttt{F}(\texttt{A},\texttt{E})=1}, \texttt{E}}$ and $\rbr{\texttt{B},\sbr{\texttt{F}(\texttt{B},\texttt{E})=0}, \texttt{E}}$ can be distinguished by GNN.

Now, we use another example in Figure~\ref{fig:app-counter2-2} to show that even when the local structures are the same, the posteriors can still be different, which is caused by the formulae.

\newpage
\begin{figure}[t!]
    \centering
    \includegraphics[width=\textwidth]{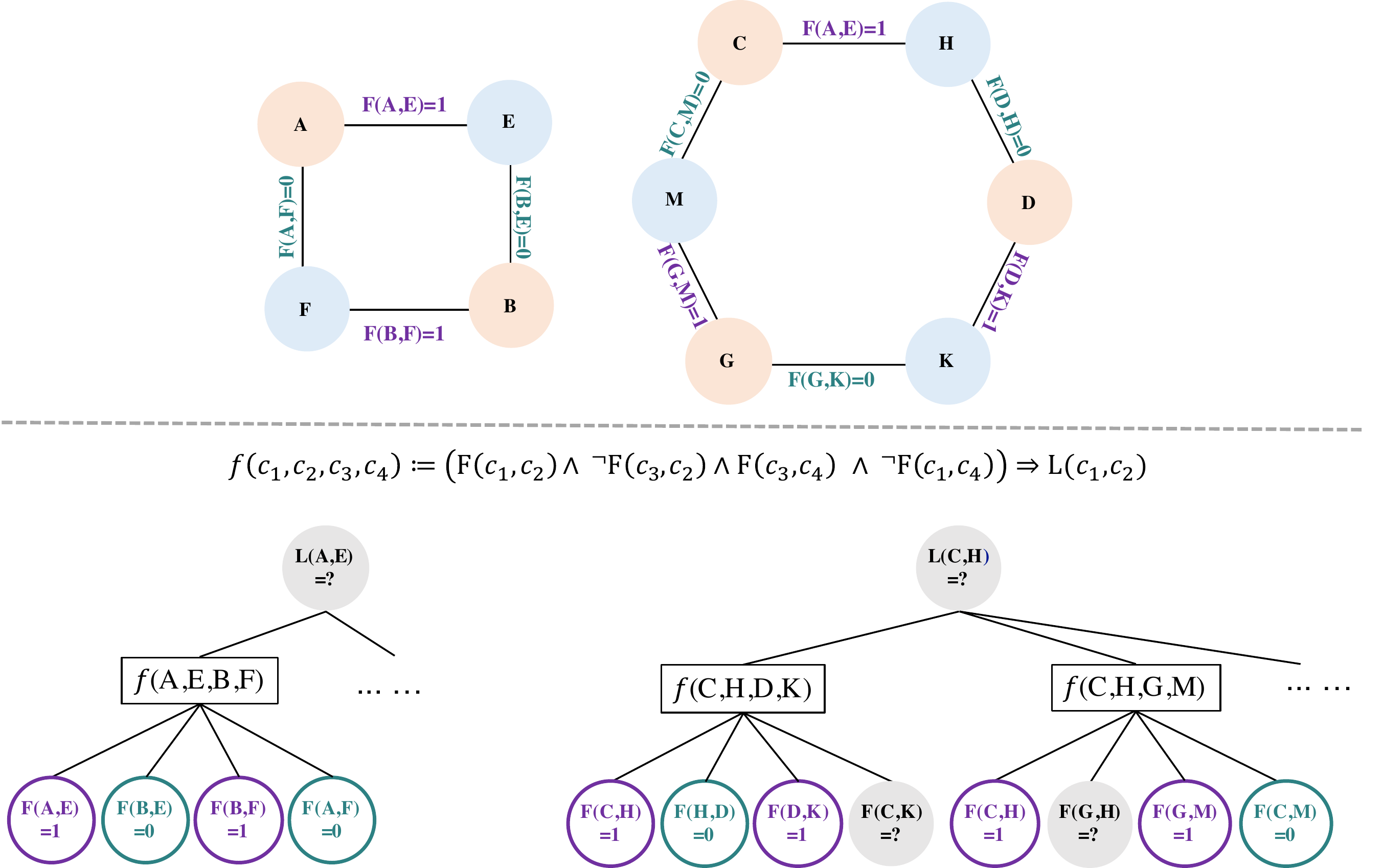}
    \caption{Example 2. {\it Top}: Knowledge base. {\it Bottom}: MLN}
    \label{fig:app-counter2-2}
\end{figure}

In Figure~\ref{fig:app-counter2-2}, $(\texttt{A},\texttt{E})$ and $(\texttt{C},\texttt{H})$ have the same local structure, so that the tuple $\rbr{\texttt{A},\sbr{\texttt{F}(\texttt{A},\texttt{E})=1}, \texttt{E}}$ and $\rbr{\texttt{C},\sbr{\texttt{F}(\texttt{C},\texttt{H})=1}, \texttt{H}}$ can NOT be distingushed by GNN. However, we can make use of subgraph $(\texttt{A}, \texttt{E}, \texttt{B}, \texttt{F})$ to define a formula, and then the resulting MLN gives different posterior to $\texttt{L}(\texttt{A},\texttt{E})$ and $\texttt{L}(\texttt{C},\texttt{H})$, as can be seen from the figure. Note that this construction of MLN is the same as the construction steps stated in the proof in Section~\ref{app:thm-proof}.

\newpage
\section{Experiment Settings}\label{app:exp-details}

\textbf{Experimental setup.}
All the experiments are conducted on a GPU-enabled (Nvidia RTX 2080 Ti) Linux machine powered by Intel Xeon Silver 4116 processors at 2.10GHz with 256GB RAM. We implement ExpressGNN using PyTorch and train it with Adam optimizer~\citep{kingma2014adam}. To ensure a fair comparison, we allocate the same computational resources (CPU, GPU and memory) for all the experiments. We use default tuned hyperparameters for competitor methods, which can reproduce the experimental results reported in their original works. For ExpressGNN, we use 0.0005 as the initial learning rate, and decay the learning rate by half for every 10 epochs without improvement in terms of validation loss. For Kinship, UW-CSE and Cora, we run ExpressGNN with a fixed number of iterations, and use the smallest subset from the original split for hyperparameter tuning. For FB15K-237, we use the original validation set to tune the hyperparameters.

Here are more details of the setup of ExpressGNN. We use a two-layer MLP with ReLU activation function as the nonlinear transformation for each embedding update step in the graph neural network in Algorithm~\ref{algo:gnn}. For different steps, we learn different MLP parameters. To increase the model capacity of ExpressGNN, we also use different MLP parameters for different edge type, and for a different direction of embedding aggregation. As we discussed in Section~\ref{sec:mln_infer}, the number of trainable parameters in GNN alone is independent of the number of entities in the knowledge base. Therefore, it has a minor impact on the computational cost by using different MLP parameters as described above. For each dataset, we search the configuration of ExpressGNN on either the validation set or the smallest subset. The configuration we search includes the embedding size, the split point of tunable embeddings and GNN embeddings, the number of embedding update steps, and the sampling batch size.

\textbf{Task and evaluation metrics.} 
The deductive logic inference task is to answer queries that typically involve single predicate. For example in UW-CSE, the task is to predict the \texttt{AdvisedBy}($c$,$c'$) relation for all persons in the set. In Cora, the task is to de-duplicate entities, and one of the query predicates is \texttt{SameAuthor}($c$,$c'$). As for Kinship, the task is to predict whether a person is male or female, i.e., \texttt{Male($c$)}. For each possible substitution of the query predicate with different entities, the model is tasked to predict whether it's true or not. Then we use the area under the precision-recall curve (AUC-PR) as the evaluation metric for inference accuracy. Due to the severe imbalance of positive and negative samples in typical logic reasoning tasks, the AUC-PR is better than the AUC-ROC to reflect the actual model performance and is widely used in the literature~\citep{richardson2006markov}.

For knowledge base completion, all methods are evaluated on test sets with Mean Reciprocal Rank (MRR) and Hits@10. Both are commonly used metrics for knowledge base completion. For each test query $r(c,c')$ with respect to relation $r$, the model is tasked to generate a rank list over all possible instantiations of $r$ and sort them according to the model's confidence on how likely this instantiation is true. Then MRR is computed as the average over all the reciprocal ranks of each $r(c,c')$ in its corresponding rank list, and Hits@10 is computed as the average times of ranking of the true fact in top 10 predictions. If two candidate entities have the same score, we break the tie by ranking the wrong ones ahead. This ensures a fair comparison for all methods. Additionally, before evaluation, the rank list will be filtered~\citep{yang2017differentiable, bordes2013translating} so that it does not contain any true fact other than $r(c,c')$ itself. 

\textbf{Sampling method.} As there are exponential many formulae in the sampling space, one cannot sample by explicitly enumerating all the formulae and permute them. On the other hand, not all ground formulae will contribute to the optimization during training. For example, a ground formula that only contains observed variables will not contribute gradients, as evaluating this formula is independent of the latent variable posterior. 

To overcome these challenges, we propose the following efficient sampling scheme: 1) to sample a ground formula we start from uniformly sampling a formula $f$ from the space of $\Fcal$; 2) shuffle its predicate space $\Rcal_f$ into a sequence; 3) for each predicate $r$ popped from the top of the $\Rcal_f$, with a probability of $p_{obs}$ we instantiate it as an observed variable and with a probability of $1 - p_{obs}$ it will become a uniformly sampled variable; 4) to instantiate an observed variable, we list all facts stored in the knowledge base with respect to predicate $r$ and uniformly sample from it. In the case where no fact can be found, or if we hit probability $1 - p_{obs}$, then the predicate $r$ will be instantiated with a random constant. Once a formula is fully instantiated, we examine its form and reject those without any latent variable. 

In experiments, we set $p_{obs}$ as 0.9. The intuition is that one wants to prioritize on sampling formulae containing both observed and latent variables. Otherwise, in the cases where a formula is fully latent, GNN is essentially optimizing towards learning the prior distribution determined by the form of formula and its weight, which is unlikely to be close to the actual posterior distribution. Additionally, for knowledge completion on FB15K-237, we further control the sample space to be query-related. Each time the model is fed with a query $r(c,c')$, we sample only the ground formulae with $r$ as the positive literal and containing constants $c$ and $c'$.

\section{Dataset Details}
\label{app:data-detail}

For experiments, we use four benchmark datasets: 1) The social network dataset UW-CSE~\citep{richardson2006markov} contains publicly available information of students and professors in the CSE department of UW. The dataset is split into five sets according to the home department of the entities. 2) The entity resolution dataset Cora~\citep{singla2005discriminative} consists of a collection of citations to computer science research papers. The dataset is also split into five subsets according to the field of research. 3) We introduce a synthetic dataset that resembles the popular Kinship dataset~\citep{denham1973detection}. The original dataset contains kinship relationships (e.g., \texttt{Father}, \texttt{Brother}) among family members in the Alyawarra tribe from Central Australia. We generate five sets by linearly increasing the number of entities. 4) The knowledge base completion benchmark FB15K-237~\citep{toutanova2015observed} is a generic knowledge base constructed from Freebase, which is designed to a more challenging variant of FB15K. More specifically, FB15K-237 is constructed by removing near-duplicate and inverse relations from FB15K. The dataset is split into training / validation / testing and we use the same split of facts from training as in prior work~\citep{yang2017differentiable}.

\subsection{Datasets statistics}
\label{app:data-statistics}
The complete statistics of the benchmark and synthetic datasets are shown in Table~\ref{tab:data_more_stats}. The statistics of Cora is averaged over its five splits. Examples of logic formulae used in four benchmark datasets are listed in Table~\ref{tab:all_rule}.

\begin{table}[t]
\caption{Complete statistics of the benchmark and synthetic datasets.}
\label{tab:data_more_stats}
\centering
\begin{tabular}{@{}lrrrrrr@{}}
\toprule
\multirow{2}{*}{\textbf{Dataset}} & \multirow{2}{*}{\# entity} & \multirow{2}{*}{\# relation} & \multirow{2}{*}{\# fact} & \multirow{2}{*}{\# query} & \# ground & \# ground \\
 &  &  &  &  & predicate & formula \\ \midrule
FB15K-237 & 15K & 237 & 272K & 20K & 50M & 679B \\ \midrule
Kinship-S1 & 62 & 15 & 187 & 38 & 50K & 550K \\
Kinship-S2 & 110 & 15 & 307 & 62 & 158K & 3M \\
Kinship-S3 & 160 & 15 & 482 & 102 & 333K & 9M \\
Kinship-S4 & 221 & 15 & 723 & 150 & 635K & 23M \\
Kinship-S5 & 266 & 15 & 885 & 183 & 920K & 39M \\ \midrule
UW-CSE-AI & 300 & 22 & 731 & 4K & 95K & 73M \\
UW-CSE-Graphics & 195 & 22 & 449 & 4K & 70K & 64M \\
UW-CSE-Language & 82 & 22 & 182 & 1K & 15K & 9M \\
UW-CSE-Systems & 277 & 22 & 733 & 5K & 95K & 121M \\
UW-CSE-Theory & 174 & 22 & 465 & 2K & 51K & 54M \\ \midrule
Cora-S1 & 670 & 10 & 11K & 2K & 175K & 621B \\
Cora-S2 & 602 & 10 & 9K & 2K & 156K & 431B \\
Cora-S3 & 607 & 10 & 18K & 3K & 156K & 438B \\
Cora-S4 & 600 & 10 & 12K & 2K & 160K & 435B \\
Cora-S5 & 600 & 10 & 11K & 2K & 140K & 339B \\
\bottomrule
\end{tabular}
\end{table}

\subsection{Synthetic Kinship Dataset}
\label{app:synthetic}
The synthetic dataset closely resembles the original Kinship dataset but with a controllable number of entities. To generate a dataset with $n$ entities, we randomly split $n$ entities into two groups which represent the first and second generation respectively. Within each group, entities are grouped into a few sub-groups representing the sister- and brother-hood. Finally, entities from different sub-groups in the first generation are randomly coupled and a sub-group in the second generation is assigned to them as their children. To generate the knowledge base, one traverse this family tree, and record all kinship relations for each entity. In this experiment, we generate five datasets by linearly increasing the number of entities. Examples of the first-order logic formulae used in the Kinship dataset is summarized in Table~\ref{tab:all_rule}.

\section{Inference with Closed-World Semantics for Baseline Methods}
\label{app:close_world_infer}

\setlength\tabcolsep{3pt}
\begin{table}[t]
\centering
\caption{Inference performance of competitors and our method under the closed-world semantics.}
\label{table:close_table}
\begin{tabular}{@{}lcccccccccc@{}}
\toprule
{\multirow{2}{*}{Method}} & \multicolumn{5}{c}{Cora} & \multicolumn{5}{c}{UW-CSE} \\ \cmidrule(l){2-11}
 & S1 & S2 & S3 & S4 & \multicolumn{1}{c|}{S5} & AI & Graphics & Language & Systems & Theory \\ \midrule
MCMC & 0.43 & 0.63 & 0.24 & 0.46 & \multicolumn{1}{c|}{0.56} & 0.19 & 0.04 & 0.03 & 0.15 & 0.08 \\
BP / Lifted BP & 0.44 & 0.62 & 0.24 & 0.45 & \multicolumn{1}{c|}{0.57} & 0.21 & 0.04 & 0.01 & 0.14 & 0.05 \\
MC-SAT & 0.43 & 0.63 & 0.24 & 0.46 & \multicolumn{1}{c|}{0.57} & 0.13 & 0.04 & 0.03 & 0.11 & 0.08 \\
HL-MRF & 0.60 & 0.78 & 0.52 & 0.70 & \multicolumn{1}{c|}{0.81} & 0.26 & 0.18 & 0.06 & 0.27 & 0.19 \\
\bottomrule
\end{tabular}
\end{table}

In Section~\ref{sec:mln_infer} we compare ExpressGNN with five probabilistic inference methods under open-world semantics. This is different from the original works, where they generally adopt the closed-world setting due to the scalability issues. More specifically, the original works assume that the predicates (except the ones in the query) observed in the knowledge base is \textit{closed}, meaning for all instantiations of these predicates that do not appear in the knowledge base are considered \textit{false}. Note that only the query predicates remain open-world in this setting.

For sanity checking, we also conduct these experiments with a closed-world setting. We found the results summarized in Table~\ref{table:close_table} are close to those reported in the original works. This shows that we have a fair setup (including memory size, hyperparameters, etc.) for those competitor methods. Additionally, one can find that the AUC-PR scores compared to those (Table~\ref{table:inference}) under open-world setting are actually better. This is due to the way the datasets were originally collected and evaluated generally complies with the closed-world assumption. But this is very unlikely to be true for real-world and large-scale knowledge base such as Freebase and WordNet, where many \textit{true} facts between entities are not observed. Therefore, in general, the open-world setting is much more reasonable, which we follow throughout this paper.

\section{Logic Formulae}
\label{app:rules}

We list some examples of logic formulae used in four benchmark datasets in Table~\ref{tab:all_rule}. The full list of logic formulae is available in our source code repository. Note that these formulae are not necessarily as clean as being always true, but are typically true.

For UW-CSE and Cora, we use the logic formulae provided in the original dataset. UW-CSE provides 94 hand-coded logic formulae, and Cora provides 46 hand-coded rules.  For Kinship, we hand-code 22 first-order logic formulae. For FB15K-237, we first use Neural LP~\citep{yang2017differentiable} on the full data to generate candidate rules. Then we select the ones that have confidence scores higher than 90\% of the highest scored formulae sharing the same target predicate. We also de-duplicate redundant rules that can be reduced to other rules by switching the logic variables. Finally, we have generated 509 logic formulae for FB15K-237.

\setlength\tabcolsep{4pt}
\begin{table}[h!]
\centering
\caption{Examples of logic formulae used in four benchmark datasets.}
\label{tab:all_rule}
\resizebox{\textwidth}{!}{
\begin{tabular}{@{}ll@{}}
\toprule
Dataset & \multicolumn{1}{l}{First-order Logic Formulae} \\ \midrule
\multirow{5}{*}{Kinship} & $\texttt{Father(X,Z)} \land \texttt{Mother(Y,Z)} \Rightarrow \texttt{Husband(X,Y)}$ \\
 & $\texttt{Father(X,Z)} \land \texttt{Husband(X,Y)} \Rightarrow \texttt{Mother(Y,Z)}$ \\
 & $\texttt{Husband(X,Y)} \Rightarrow \texttt{Wife(Y,X)}$ \\
 & $\texttt{Son(Y,X)} \Rightarrow \texttt{Father(X,Y)} \lor \texttt{Mother(X,Y)}$ \\
 & $\texttt{Daughter(Y,X)} \Rightarrow \texttt{Father(X,Y)} \lor \texttt{Mother(X,Y)}$ \\ \midrule
\multirow{5}{*}{UW-CSE} & $\texttt{taughtBy(c, p, q)} \land \texttt{courseLevel(c, Level500)} \Rightarrow \texttt{professor(p)}$ \\
 & $\texttt{tempAdvisedBy(p, s)} \Rightarrow \texttt{professor(p)}$ \\
 & $\texttt{advisedBy(p, s)} \Rightarrow \texttt{student(s)}$ \\
 & $\texttt{tempAdvisedBy(p, s)} \Rightarrow \texttt{student(s)}$ \\
 & $\texttt{professor(p)} \land \texttt{hasPosition(p, Faculty)} \Rightarrow \texttt{taughtBy(c, p, q)}$ \\ \midrule
\multirow{5}{*}{Cora} & $\texttt{SameBib(b1,b2)} \land \texttt{SameBib(b2,b3)} \Rightarrow \texttt{SameBib(b1,b3)}$ \\
 & $\texttt{SameTitle(t1,t2)} \land \texttt{SameTitle(t2,t3)} \Rightarrow \texttt{SameTitle(t1,t3)}$ \\
 & $\texttt{Author(bc1,a1)} \land \texttt{Author(bc2,a2)} \land \texttt{SameAuthor(a1,a2)} \Rightarrow \texttt{SameBib(bc1,bc2)}$ \\
 & $\texttt{HasWordVenue(a1, +w)} \land \texttt{HasWordVenue(a2, +w)} \Rightarrow \texttt{SameVenue(a1, a2)}$ \\
 & $\texttt{Title(bc1,t1)} \land \texttt{Title(bc2,t2)} \land \texttt{SameTitle(t1,t2)} \Rightarrow \texttt{SameBib(bc1,bc2)}$ \\ \midrule
\multirow{5}{*}{FB15K-237} & $\texttt{position(B, A)} \land \texttt{position(C, B)} \Rightarrow \texttt{position(C, A)}$ \\
 & $\texttt{ceremony(B, A)} \land \texttt{ceremony(C, B)} \Rightarrow \texttt{categoryOf(C, A)}$ \\
 & $\texttt{film(B, A)} \land \texttt{film(C, B)} \Rightarrow \texttt{participant(A, C)}$ \\
 & $\texttt{storyBy(A, B)} \Rightarrow \texttt{participant(A, B)}$ \\
 & $\texttt{adjoins(A, B)} \land \texttt{country(B, C)} \Rightarrow \texttt{serviceLocation(A, C)}$ \\ \bottomrule
\end{tabular}
}
\end{table}

\end{document}